\newcommand{\para}[1]{{\bf #1}\quad}
\newcommand{\be}{\begin{eqnarray} \begin{aligned}}
\newcommand{\ee}{\end{aligned} \end{eqnarray} }
\newcommand{\benn}{\begin{eqnarray*} \begin{aligned}}
\newcommand{\eenn}{\end{aligned} \end{eqnarray*} }
\def\thm@space@setup{%
  \thm@preskip=\parskip \thm@postskip=0pt
}
\newtheorem{theorem}{Theorem}[section]
\newtheorem*{theorem*}{Theorem}
\newtheorem{lemma}[theorem]{Lemma}
\newtheorem{corollary}[theorem]{Corollary}
\newenvironment{definition}[1][Definition]{\begin{trivlist}
\item[\hskip \labelsep {\bfseries #1}]}{\end{trivlist}}
\icmltitlerunning{The Information Sieve}
\begin{document}

\twocolumn[
\icmltitle{The Information Sieve}

\icmlauthor{Greg Ver Steeg}{gregv@isi.edu}
\icmladdress{University of Southern California, Information Sciences Institute,
            Marina del Rey, CA 90292 USA}
\icmlauthor{Aram Galstyan}{galstyan@isi.edu}
\icmladdress{University of Southern California, Information Sciences Institute,
            Marina del Rey, CA 90292 USA}

\icmlkeywords{machine learning, information theory, representation learning, unsupervised learning}

\vskip 0.3in
]
\begin{abstract}
We introduce a new framework for unsupervised learning of representations based on a novel hierarchical decomposition of information. 
Intuitively, data is passed through a series of progressively fine-grained sieves.
Each layer of the sieve recovers a single latent factor that is maximally informative about multivariate dependence in the data. 
The data is transformed after each pass so that the remaining unexplained information trickles down to the next layer.
Ultimately, we are left with a set of latent factors explaining all the dependence in the original data and remainder information consisting of independent noise.
We present a practical implementation of this framework for discrete variables and apply it to a variety of fundamental tasks in unsupervised learning including independent component analysis, lossy and lossless compression, and predicting missing values in data.
\end{abstract}

The hope of finding a succinct principle that elucidates the brain's information processing abilities has often kindled interest in information-theoretic ideas~\cite{barlow,simoncelli}.  
In machine learning, on the other hand, the past decade has witnessed a shift in focus toward expressive, hierarchical models, with successes driven by increasingly effective ways to leverage labeled data to learn rich models~\cite{schmidhuber,bengioreview}.  
Information-theoretic ideas like the venerable InfoMax principle~\cite{linsker,bell95} can be and are applied in both contexts with empirical success but they do not allow us to quantify the \emph{information value} of adding depth to our representations. 
We introduce a novel incremental and hierarchical decomposition of information and show that it defines a framework for unsupervised learning of deep representations in which the information contribution of each layer can be precisely quantified. Moreover, this scheme automatically determines the structure and depth among hidden units in the representation based only on local learning rules. 

The shift in perspective that enables our information decomposition is to focus on how well the learned representation explains multivariate mutual information in the data (a measure originally introduced as ``total correlation''~\cite{watanabe}).  Intuitively, our approach constructs a hierarchical representation of data by passing it through a sequence of progressively fine-grained sieves. At the first layer of the sieve we learn a factor that explains as much of the dependence in the data as possible. The data is then transformed into the ``remainder information'', which has this dependence extracted. The next layer of the sieve looks for the largest source of dependence in the remainder information, and the cycle repeats. At each step, we obtain a successively tighter upper and lower bound on the multivariate information in the data, with convergence between the bounds obtained when the remaining information consists of nothing but independent factors. Because we end up with independent factors, one can also view this decomposition as a new way to do independent component analysis (ICA)~\cite{comon, ica}. Unlike traditional methods, we do not assume a specific generative model of the data (i.e., that it consists of a linear transformation of independent sources) and we extract independent factors incrementally rather than all at once. The implementation we develop here uses only discrete variables and is therefore most relevant for the challenging problem of ICA with discrete variables, which has applications to compression~\cite{feder}.  

After providing some background in Sec.~\ref{sec:background}, we introduce a new way to iteratively decompose the information in data in Sec.~\ref{sec:decomposition}, and show how to use these decompositions to define a practical and incremental framework for unsupervised representation learning in Sec.~\ref{sec:implement}. We demonstrate the versatility of this framework by applying it first to independent component analysis (Sec.~\ref{sec:ica}). Next, we use the sieve as a lossy compression to perform tasks typically relegated to generative models including in-painting and generating new samples (Sec.~\ref{sec:mnist}). Finally, we cast the sieve as a lossless compression and show that it beats standard compression schemes on a benchmark task (Sec.~\ref{sec:lossless}).

\section{Information-theoretic learning background}\label{sec:background}
Using standard notation~\cite{cover}, capital $X_i$ denotes a random variable taking values in some domain and whose instances are denoted in lowercase, $x_i$. In this paper, the domain of all variables are considered to be discrete and finite. We abbreviate multivariate random variables, $X \equiv X_{1:n} \equiv X_1,\ldots,X_n$, with an associated probability distribution, $p_X(X_1=x_1,\ldots, X_n=x_n)$, which is typically abbreviated to $p(x)$.  We will index different groups of multivariate random variables with superscripts, $X^k$, as defined in Fig.~\ref{fig:sieve}. We let $X^0$ denote the original observed variables and we often omit the superscript in this case for readability. 

Entropy is defined in the usual way as $H(X) \equiv \mathbb E_X [ \log 1/p(x)]$. We use base two logarithms so that the unit of information is bits.
Higher-order entropies can be constructed in various ways from this standard definition. For instance, the mutual information between two groups of random variables, $X$ and $Y$ can be written as the reduction of uncertainty in one variable, given information about the other, $ I(X;Y) = H(X) - H(X|Y)$.

The ``InfoMax'' principle~\cite{linsker,bell95} suggests that for unsupervised learning we should construct $Y$'s to maximize their mutual information with $X$, the data. Despite its intuitive appeal, this approach has several potential problems~(see \cite{icml2014} for one example). Here we focus on the fact that the InfoMax principle is not very useful for characterizing ``deep representations'', even though it is often invoked in this context~\cite{bengio_autoencoders}. This follows directly from the data processing inequality (a similar argument appears in~\cite{tishby_deep}). Namely, if we start with $X$, construct a layer of hidden units $Y^1$ that are a function of $X$, and continue adding layers to a stacked representation so that $X \rightarrow Y^1 \rightarrow Y^2 \ldots Y^k$, then the information that the $Y$'s have about $X$ cannot increase after the first layer, $I(X;Y^{1:k}) = I(X;Y^1)$. From the point of view of mutual information, $Y^1$ is a copy and $Y^2$ is just a copy of a copy. While a coarse-grained copy might be useful, the InfoMax principle does not quantify how or why. 

Instead of looking for a $Y$ that memorizes the data, we shift our perspective to searching for a $Y$ so that the $X_i$'s are as independent as possible conditioned on this $Y$. Essentially, we are trying to reconstruct the latent factors that are the cause of the dependence in $X_i$. To formalize this, we introduce the multivariate mutual information which was first introduced as ``total correlation''~\cite{watanabe}. 
\be\label{eq:tc}
TC(X) &\equiv& D_{KL}\left(p(x) || \prod_{i=1}^n p(x_i)\right) \\
&=& \sum_{i=1}^n H(X_i) - H(X) 
\ee
This quantity reflects the dependence in $X$ and is zero if and only if the $X_i$'s are independent. 
Just as mutual information is the reduction of entropy in $X$ after conditioning on $Y$, we can define the reduction in multivariate information in $X$ after conditioning on $Y$. 
\be\label{eq:tcxy}
TC(X;Y) &\equiv& TC(X) - TC(X|Y) \\
&=& \sum_{i=1}^n I(X_i;Y) - I(X;Y).
\ee
That $TC(X)$ can be hierarchically decomposed in terms of short and long range dependencies was already appreciated by Watanabe~\cite{watanabe} and has been used in applications such as hierarchical clustering~\cite{kraskov_cluster}. This provides a hint about how higher levels of hierarchical representations can be useful: more abstract representations should reflect longer range dependencies in the data. 
Our contribution below is to demonstrate a tractable approach for learning a hierarchy of latent factors, $Y$, that exactly capture the multivariate information in $X$.

\section{Incremental information decomposition}\label{sec:decomposition}

We consider any set of probabilistic functions of some input variables, $X$, to be a ``representation'' of $X$. 
Looking at Fig.~\ref{fig:sieve}(a), we consider a representation with a single learned latent factor, $Y$. Then, we try to save the information in $X$ that is not captured by $Y$ into the ``remainder information'', $\bar X$.  The final result is encapsulated in Cor.~\ref{iterative} which says that we can repeat this procedure iteratively (as in Fig.~\ref{fig:sieve}(b)) and $TC(X)$ decomposes into a sum of non-negative contributions from each $Y_k$. Note that $X^{(k)}$ includes $Y_k$, so that $Y$'s at subsequent layers can depend on latent factors learned at earlier layers.

\begin{figure}[tbp] 
   (a) \includegraphics[width=0.95\columnwidth]{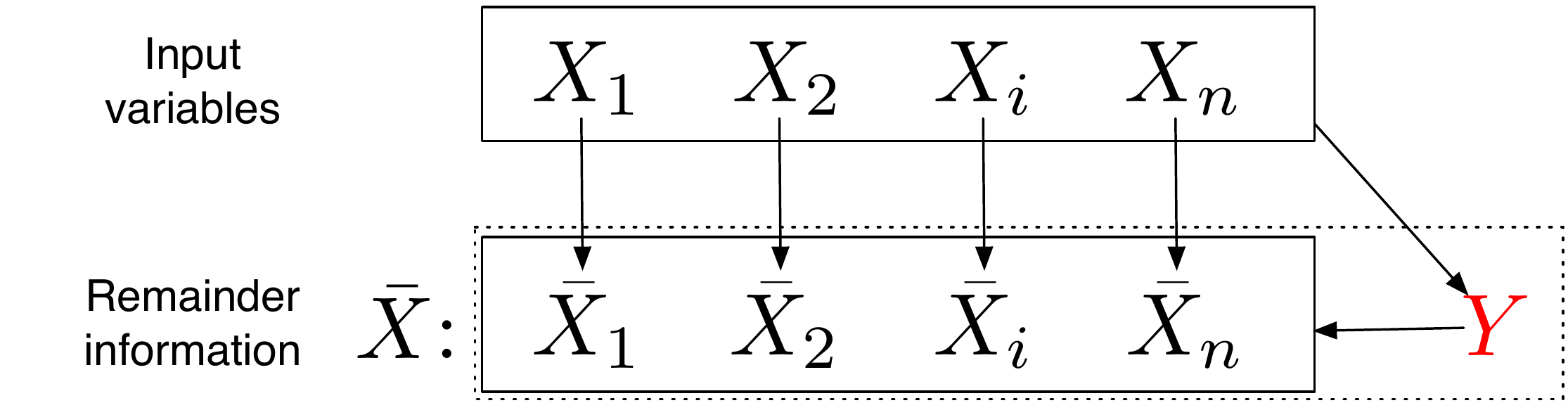} 

   \vspace{4mm}
    (b)
   \begin{minipage}[c]{0.95\columnwidth}
$$
\arraycolsep=1.6pt\def\arraystretch{1.2}
   \begin{array}{rcccccc} 
      X^0: & \bf X_1 & \ldots & \bf X_n & & & \\
      X^1: & X_1^1 & \ldots & X_n^1 & \color{red} Y_1 & & \\
      X^2: & X_1^2 & \ldots & X_n^2 &Y_1^2 & \color{red} Y_2 & \\
      \cdots \\
      X^k: & X_1^k & \ldots & X_n^k &Y_1^k &Y_2^k & \color{red} Y_k \\
   \end{array}
$$
\end{minipage}
   \caption{(a) This diagram describes one layer of the information sieve. In this graphical model, the variables in the top layer ($X_i$'s) represent (observed) input variables. $Y$ is some function of all the $X_i$'s that is optimized to be maximally informative about multivariate dependence in $X$. The remainder information, $\bar X_i$ depends on $X_i$ and $Y$ and is set to contain information in $X_i$ that is not captured by $Y$. (b) Summary of variable naming scheme for multiple layers of the sieve. The input variables are in bold and the learned latent factors are in red. 
   }   \vspace{-2mm}
   \label{fig:sieve}
\end{figure}

\begin{theorem} 
{\bf Incremental Decomposition of Information}\quad
\label{incremental}
Let $Y$ be some (deterministic) function of $X_1, \ldots, X_n$ and let $\bar X_i$ be a probabilistic function of $X_i, Y$, for each $i = 1,\ldots,n$. Then the following upper and lower bounds on $TC(X)$ hold:
\be\label{eq:bounds}
A \leq TC(X) - \left(TC(\bar X) + TC(X;Y)\right) \leq B\\
  A = - \sum_{i=1}^n I(\bar X_i ; Y), \quad B = \sum_{i=1}^n H(X_i | \bar X_i, Y)
\ee 
\end{theorem}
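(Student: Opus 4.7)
The plan is to reduce the statement to a single closed-form identity for the gap $\Delta \equiv TC(X) - TC(\bar X) - TC(X;Y)$, from which both bounds follow by elementary non-negativity and subadditivity arguments. Using the definition $TC(X;Y) = TC(X) - TC(X\mid Y)$ from (\ref{eq:tcxy}), the quantity simplifies immediately to $\Delta = TC(X\mid Y) - TC(\bar X)$, which depends only on first-order and joint entropies of $X$, $\bar X$, and $Y$.

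The next step is to exploit the two structural hypotheses. Since $Y$ is deterministic in $X$, we have $H(Y \mid X, \bar X) = 0$ and $H(X \mid Y) = H(X) - H(Y)$. Since each $\bar X_i$ is a probabilistic function of $(X_i, Y)$ alone, the $\bar X_i$'s are conditionally independent given $X$, so $H(\bar X \mid X) = \sum_{i=1}^n H(\bar X_i \mid X_i, Y)$. I would then compute $H(X, \bar X, Y)$ two ways by the chain rule (expanding $X$ first, then $\bar X, Y$; versus expanding $\bar X, Y$ first, then $X$) and equate. Combining the resulting identity with the per-coordinate identity $H(X_i \mid Y) + H(\bar X_i \mid X_i, Y) = H(\bar X_i \mid Y) + H(X_i \mid \bar X_i, Y)$, obtained by expanding $H(X_i, \bar X_i \mid Y)$ two ways, the $H(Y)$ and $I(Y;\bar X)$ terms cancel and one is left with
\begin{equation*}
\Delta \;=\; \sum_{i=1}^n H(X_i \mid \bar X_i, Y) \;-\; \sum_{i=1}^n I(\bar X_i; Y) \;-\; H(X \mid \bar X, Y).
\end{equation*}

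From this closed form, the upper bound $\Delta \leq B$ is immediate: the two remaining terms $-\sum_i I(\bar X_i; Y) \leq 0$ and $-H(X \mid \bar X, Y) \leq 0$ are both non-positive. The lower bound $\Delta \geq A$ reduces to $\sum_i H(X_i \mid \bar X_i, Y) \geq H(X \mid \bar X, Y)$, which I would prove by the chain rule followed by monotonicity of conditional entropy:
\begin{equation*}
H(X \mid \bar X, Y) \;=\; \sum_{i=1}^n H(X_i \mid X_{<i}, \bar X, Y) \;\leq\; \sum_{i=1}^n H(X_i \mid \bar X, Y) \;\leq\; \sum_{i=1}^n H(X_i \mid \bar X_i, Y),
\end{equation*}
where the last step drops conditioning on the $\bar X_{\neq i}$.

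The main obstacle is the algebraic bookkeeping that produces the closed-form identity for $\Delta$: the two structural hypotheses must enter at precisely the right places, and several $H(Y)$- and $I(Y;\bar X)$-type terms have to be chased through the chain-rule manipulations until they cancel cleanly. Once the identity is in hand, both inequalities are essentially one-line consequences of standard entropy inequalities, with the upper bound using nothing more than non-negativity and the lower bound using only subadditivity of entropy and ``conditioning reduces entropy''.
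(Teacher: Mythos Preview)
Your proposal is correct. The closed-form identity
\[
\Delta \;=\; \sum_{i=1}^n H(X_i \mid \bar X_i, Y) \;-\; \sum_{i=1}^n I(\bar X_i; Y) \;-\; H(X \mid \bar X, Y)
\]
is exactly right (recalling that $\bar X$ already contains $Y$, so $H(X\mid\bar X,Y)=H(X\mid\bar X)$), and both bounds follow from it just as you describe.

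Your route differs from the paper's in its starting point. The paper invokes a previously established ``Basic Decomposition of Information'' result, which introduces the auxiliary quantity $TC_L(X;\bar X)=\sum_i I(X_i;\bar X)-\sum_j I(\bar X_j;X)$ and the identity $TC(X;\bar X)=TC(\bar X)+TC_L(X;\bar X)$. It then spends most of the argument unwinding $TC_L(X;\bar X)$ via repeated chain-rule steps to isolate a $TC(X;Y)$ term, arriving at
\[
\Delta \;=\; TC(X\mid\bar X) \;-\; \sum_i I(\bar X_i;Y) \;+\; \sum_i I(X_i;\bar X_{\tilde i}\mid Y,\bar X_i),
\]
which is algebraically the same as your identity once you expand $TC(X\mid\bar X)$ and the conditional mutual informations. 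You instead start from the immediate reduction $\Delta = TC(X\mid Y)-TC(\bar X)$ and compute $H(X,\bar X,Y)$ two ways, together with the per-coordinate two-way expansion of $H(X_i,\bar X_i\mid Y)$. This is more self-contained and arguably cleaner: it needs no external lemma, and your form of the identity makes the upper bound a one-liner, whereas the paper has to recombine two terms to recover $\sum_i H(X_i\mid\bar X_i,Y)$. The paper's version, on the other hand, makes the lower bound a one-liner (drop two manifestly non-negative terms), while you need the short subadditivity chain. Either way the content is the same.
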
 
A proof is provided in App.~B. 
Note that the remainder information, $\bar X \equiv \bar X_1, \ldots, \bar X_n, Y$, includes $Y$. 
Bounds on $TC(X)$ also provide bounds on $H(X)$ by using Eq.~\ref{eq:tc}. 
Next, we point out that the remainder information, $\bar X$, can be chosen to make these bounds tight.
%
\begin{lemma}\label{lemma}
{\bf Construction of perfect remainder information}\quad
For discrete, finite random variables $X_i, Y$ drawn from some distribution, $p(X_i, Y)$, it is possible to define another random variable $\bar X_i \sim p(\bar X_i | X_i, Y)$ that satisfies the following two properties:
\benn
&\mbox{(i)} &I(\bar X_i ; Y) = 0 \quad &\mbox{Remainder contains no $Y$ info} \\
&\mbox{(ii)} &H(X_i|\bar X_i, Y) = 0 \quad &\mbox{Original is perfectly recoverable}
\eenn
\end{lemma}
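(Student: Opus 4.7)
The plan is to construct $\bar X_i$ as a ``relabeled'' version of $X_i$ whose marginal law has been \emph{whitened} against $Y$, via a coupling / monotone-rearrangement argument. Concretely, for each $y$ I will exhibit a deterministic reconstruction map $(\bar X_i, y) \mapsto X_i$ (which immediately gives property (ii)), together with a choice of $p(\bar X_i \mid X_i, y)$ that makes the resulting marginal of $\bar X_i$ the \emph{same} distribution for every $y$ (which gives property (i)).

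First I would fix a ``noise'' alphabet $\mathcal Z$ and target marginal $q$ for $\bar X_i$. In the finite-rational case, where all $p(x_i \mid y)$ share a common denominator $N$, the cleanest choice is $\mathcal Z = \{1,\ldots,N\}$ with $q$ uniform. For each value $y$, partition $\mathcal Z$ into disjoint blocks $\{B_{x_i, y}\}_{x_i \in \mathcal X_i}$ with $|B_{x_i, y}| = N\, p(x_i \mid y)$; such a partition exists because the block sizes are nonnegative integers summing to $N$. Then define
$$
p(\bar X_i = z \mid X_i = x_i, Y = y) \;=\; \frac{\mathbf 1\{z \in B_{x_i, y}\}}{|B_{x_i, y}|},
$$
and the reconstruction $f(z, y) = x_i \Leftrightarrow z \in B_{x_i, y}$.

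Next I would verify the two properties. Property (ii) is immediate from the construction: $X_i = f(\bar X_i, Y)$ is a deterministic function, so $H(X_i \mid \bar X_i, Y) = 0$. Property (i) is a one-line marginalization:
$$
p(\bar X_i = z \mid Y = y) \;=\; \sum_{x_i} p(x_i \mid y)\, \frac{\mathbf 1\{z \in B_{x_i, y}\}}{N\, p(x_i \mid y)} \;=\; \frac{1}{N},
$$
which is independent of $y$, hence $\bar X_i \perp Y$ and $I(\bar X_i ; Y) = 0$.

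The main obstacle is the case of conditional probabilities with no finite common denominator. The natural remedy is to let $\bar X_i$ be continuous on $[0,1]$, built by the standard inverse-CDF (quantile) coupling: order $\mathcal X_i$ and, for each $y$, partition $[0,1]$ into intervals $B_{x_i, y}$ of length $p(x_i \mid y)$; the same verification carries over with sums replaced by Lebesgue integrals. Since the paper treats discrete variables with empirically estimated (hence rational) probabilities, the finite construction above suffices, and it is this version I would include as the formal proof.
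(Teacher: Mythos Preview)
Your construction is correct and shares the paper's underlying idea: for each value $y$, partition a common reference space so that the block containing $\bar X_i$ determines $X_i$ (giving (ii)) while the induced marginal of $\bar X_i$ is the same for every $y$ (giving (i)). The difference is in how fine the partition is. You split the reference space into $N$ equal cells, where $N$ is a common denominator of all $p(x_i\mid y)$; this forces you to assume rational conditionals and then fall back to a continuous $\bar X_i$ on $[0,1]$ otherwise. The paper instead stacks, for each $y$, the intervals of length $p(x_i\mid y)$ on $[0,1]$ and takes only the \emph{common refinement} of those $|\mathcal Y|$ interval partitions: each resulting sub-interval becomes one value of $\bar X_i$, with probability equal to its length. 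That refinement is always finite---at most $|\mathcal Y|\,(|\mathcal X_i|-1)+1$ cells---so the paper obtains a discrete, finite $\bar X_i$ without any rationality hypothesis and with a much smaller alphabet than your $N$. In short, your argument is sound and the verification is the same marginalization you wrote; the paper's version is just the coarsest partition that still works, which neatly removes what you flagged as the ``main obstacle.''
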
 
We give a concrete construction in App.~C. We would like to point out one caveat here. The cardinality of $\bar X_i$ may have to be large to satisfy these equalities. For a fixed number of samples, this may cause difficulties with estimation, as discussed in Sec.~\ref{sec:implement}. 
With perfect remainder information in hand, our decomposition becomes exact. 
%
\begin{corollary}
{\bf Exact decomposition}\quad
\label{exact}
For $Y$ a function of $X$ and perfect remainder information, $\bar X_i, i=1,\ldots,n$, as defined in Lemma~\ref{lemma}, the following decomposition holds:
\be\label{eq:exact}
TC(X) = TC(\bar X) + TC(X;Y)
\ee
\end{corollary}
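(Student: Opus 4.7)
The plan is to observe that this corollary is an immediate consequence of Theorem~\ref{incremental} (Incremental Decomposition) combined with Lemma~\ref{lemma} (Construction of perfect remainder information). The strategy is a squeeze argument: Theorem~\ref{incremental} bounds the quantity $TC(X) - (TC(\bar X) + TC(X;Y))$ from below by $A = -\sum_{i=1}^n I(\bar X_i; Y)$ and from above by $B = \sum_{i=1}^n H(X_i \mid \bar X_i, Y)$, and Lemma~\ref{lemma} says we can pick $\bar X_i$ so that both of these sums vanish termwise.

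More concretely, I would first invoke Lemma~\ref{lemma} separately for each index $i = 1,\ldots,n$: for each $i$, the lemma produces a conditional distribution $p(\bar X_i \mid X_i, Y)$ such that $I(\bar X_i; Y) = 0$ and $H(X_i \mid \bar X_i, Y) = 0$. Sampling the $\bar X_i$'s independently given $(X_i, Y)$ according to these conditionals gives a joint remainder $\bar X = (\bar X_1, \ldots, \bar X_n, Y)$ of the form required by Theorem~\ref{incremental} (each $\bar X_i$ is a probabilistic function of $X_i$ and $Y$). Summing condition (i) over $i$ gives $A = 0$, and summing condition (ii) over $i$ gives $B = 0$.

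Plugging $A = B = 0$ into the double inequality of Theorem~\ref{incremental} yields
\begin{equation*}
0 \leq TC(X) - \bigl(TC(\bar X) + TC(X;Y)\bigr) \leq 0,
\end{equation*}
which forces the equality $TC(X) = TC(\bar X) + TC(X;Y)$ claimed in Eq.~\ref{eq:exact}.

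Since Theorem~\ref{incremental} and Lemma~\ref{lemma} do all the real work, there is no substantive obstacle here; the only thing to be mildly careful about is that the construction of $\bar X_i$ in the lemma is done one coordinate at a time, so I should make explicit that the joint $\bar X$ needed by the theorem is obtained by combining these per-coordinate constructions (each $\bar X_i$ depending only on $X_i$ and $Y$, as the theorem's hypothesis demands), rather than by any joint construction that might introduce additional dependencies.
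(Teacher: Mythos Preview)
Your proposal is correct and matches the paper's own argument: the paper simply says the corollary ``follows directly from Eq.~\ref{eq:bounds} and the definition of perfect remainder information,'' which is exactly your squeeze using $A=B=0$. Your extra remark about assembling the joint $\bar X$ from the per-coordinate constructions is a reasonable clarification but does not depart from the paper's approach.
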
 
The above corollary follows directly from Eq.~\ref{eq:bounds} and the definition of perfect remainder information. Intuitively, it states that the dependence in $X$ can be decomposed into a piece that is explained by $Y$, $TC(X;Y)$, and the remaining dependence in $\bar X$. This decomposition can then be iterated to extract more and more information from the data.
%
\begin{corollary}{\bf Iterative decomposition} \quad
\label{iterative}
Using the variable naming scheme in Fig.~\ref{fig:sieve}(b), we construct a hierarchical representation where each $Y_k$ is a function of $X^{k-1}$ and $X^k$ includes the (perfect) remainder information from $X^{k-1}$ according to Lemma~\ref{lemma}.  
\be\label{eq:iterative}
TC(X) = TC(X^r) +  \sum_{k=1}^r TC(X^{k-1} ; Y_k) \label{eq:decomp}
\ee
\end{corollary}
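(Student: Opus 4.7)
The plan is to prove this by induction on the number of sieve layers $r$, using Corollary~\ref{exact} as the one-step workhorse. The statement of Corollary~\ref{exact} is agnostic to the number and nature of the input components, so it can be applied at every layer to the current ``data'' $X^{k-1}$ (which, after the first layer, already contains some previously extracted $Y_j$'s as components) together with the newly learned factor $Y_k$.

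For the base case $r=1$, the identity $TC(X^0) = TC(X^1) + TC(X^0;Y_1)$ is literally Corollary~\ref{exact} applied to the input $X^0 = X$ with latent $Y_1$, since by construction $X^1$ consists of $Y_1$ together with the perfect remainder information of $X^0_1,\dots,X^0_n$ with respect to $Y_1$ (as guaranteed by Lemma~\ref{lemma}). For the inductive step, assume
\begin{equation*}
TC(X) = TC(X^{r-1}) + \sum_{k=1}^{r-1} TC(X^{k-1};Y_k).
\end{equation*}
Now apply Corollary~\ref{exact} once more to the multivariate variable $X^{r-1}$ (treated as an $(n+r-1)$-tuple of components, including the previously extracted $Y_1,\dots,Y_{r-1}$) with $Y_r$, which by hypothesis is a deterministic function of $X^{r-1}$ and whose perfect remainder information (in the sense of Lemma~\ref{lemma}, applied componentwise) together with $Y_r$ itself constitutes $X^r$. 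This yields $TC(X^{r-1}) = TC(X^r) + TC(X^{r-1};Y_r)$. Substituting into the inductive hypothesis gives the desired equation \eqref{eq:iterative}.

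The only subtlety I anticipate is bookkeeping: one must check that the hypotheses of Corollary~\ref{exact} are legitimately satisfied at each layer even though the ``components'' of $X^{k-1}$ now include latent factors $Y_j$ with $j<k$. This is immediate because Lemma~\ref{lemma} makes no structural assumption on the components beyond being discrete and finite, so we are free to invoke it for each $X_i^{k-1}$ and each $Y_j^{k-1}$ ($j<k$) in turn. No further obstacle arises; the corollary is essentially a telescoping identity built on a single exact one-layer decomposition, with Lemma~\ref{lemma} ensuring exactness at every step.
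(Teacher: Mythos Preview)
Your proposal is correct and matches the paper's own argument: the paper states that Eq.~\eqref{eq:decomp} follows from repeated application of Corollary~\ref{exact}, which is precisely the telescoping induction you outline. The extra bookkeeping you note about the components of $X^{k-1}$ including earlier $Y_j$'s is exactly the content of Fig.~\ref{fig:sieve}(b), so no gap remains.
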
 
It is easy to check that Eq.~\ref{eq:decomp} results from repeated application of Cor.~\ref{exact}. We show in the next section that the quantities of the form $TC(X^{k-1};Y_k)$ can be estimated and optimized over efficiently, despite involving high-dimensional variables. As we add the (non-negative) contributions from optimizing $TC(X^{k-1} ; Y_k)$, the remaining dependence in the remainder information, $TC(X^k)$, must decrease because $TC(X)$ is some data-dependent constant. 
Decomposing data into independent factors is exactly the goal of ICA, and the connections are discussed in Sec.~\ref{sec:ica}.

\section{Implementing the sieve}\label{sec:implement}
Because this learning framework contains many unfamiliar concepts, we consider a detailed analysis of a toy problem in Fig.~\ref{fig:example} while addressing concrete issues in implementing the information sieve. 
\begin{figure}[htbp]
\begin{minipage}[c]{0.45\columnwidth}
\benn 
Y_1 &= \arg \max_{Y=f(X)} TC(X;Y) \\
X_1^1 &= X_1 + Y_1 \mod 2 \\
 X_2^1 &= X_2 + Y_1 \mod 2 \\
 X_3^1 &= X_3
\eenn
\end{minipage}
\quad \quad
   \begin{minipage}[c]{0.4\columnwidth}
$$
\arraycolsep=1.2pt\def\arraystretch{1.}
   \begin{array}{ccc||c|ccc} 
   \multicolumn{3}{c||}{\mbox{Data}} & \multicolumn{4}{c}{\mbox{Remainder}} \\ \hline
      X_1 & X_2 & X_3 & Y_1 & X_1^1 & X_2^1 & X_3^1 \\ \hline
      0 & 0 & 1 & 0 & 0 & 0 & 1   \\
      0 & 0 & 0 & 0 & 0 & 0 & 0   \\
      1 & 1 & 0 & 1 & 0 & 0 & 0   \\
      1 & 1 & 1 & 1 & 0 & 0 & 1  
   \end{array}
$$
\end{minipage}
\caption{A simple example for which we imagine we have samples of $X$ drawn from some distribution.
\vspace{-2mm}
}   \label{fig:example}
\end{figure}

\para{Step 1: Optimizing $TC(X^{k-1};Y_k)$} First, we construct a variable, $Y_k$, that is some arbitrary function of $X^{k-1}$ and that explains as much of the dependence in the data is possible. Note that we have to pick the cardinality of $Y_k$ and we will always use binary variables. 
Dropping the layer indices, $k$, the optimization can be written as follows.
\be\label{eq:opt}
\max_{p(y|x)} \sum_i I(X_i ; Y) - I(X;Y)
\ee
Here, we have relaxed the optimization to allow for probabilistic functions of $X$. If we take the derivative of this expression (along with the constraint that $p(y|x)$ should be normalized)  and set it equal to zero, the following simple fixed point equation emerges.
\benn
p(y|x) = \frac{p(y)}{Z(x)} \prod_{i=1}^n \frac{p(x_i|y)}{p(x_i)}
\eenn
The state space of $X$ is exponentially large in $n$, the number of variables. Fortunately, this fixed point equation tells us that we can write the solution in terms of a linear number of terms which are just marginal likelihood ratios. Details of this optimization are discussed in Sec.~A. Note that the optimization provides a probabilistic function which we round to a deterministic function by taking the most likely value of $Y$ for each $X$. In the example in Fig.~\ref{fig:example}, $TC(X;Y_1) = 1 \mbox{ bit}$, which can be verified from Eq.~\ref{eq:tcxy}. 

Surprisingly, we did not need to restrict or parametrize the set of possible functions; the simple form of the solution was implied by the objective. Furthermore, we can also use this function to find labels for previously unseen examples or to calculate $Y$'s for data with missing variables (details in Sec.~A).  Not only that, but a byproduct of the procedure is to give us a value for the objective $TC(X^{k-1};Y_k)$, which can be estimated even from a small number of samples.  

\para{Step 2: Remainder information} Next, the goal is to construct the remainder information, $X_i^k$, as a probabilistic function of $X_i^{k-1}, Y_k$, so that the following conditions are satisfied: $(i) I(X^k_i ;Y_1)=0$ and  $(ii)  H(X_i^{k-1}|X_i^k,Y_k) =0.$
This can be done exactly and we provide a simple algorithm in Sec.~C. Solutions for this example are given in Fig.~\ref{fig:example}. Concretely, we estimate the marginals, $p(x_i^{k-1}, y_k)$ from data and then write down a conditional probability table, $p(x_i^k | x_i^{k-1}, y_k)$, satisfying the conditions.
The example in Fig.~\ref{fig:example} was constructed so that the remainder information had the same cardinality as the original variables. This is not always possible. While we can always achieve perfect remainder information by letting the cardinality of the remainder information grow, it might become difficult to estimate marginals of the form $p(X^{k-1}_i, Y_k)$ at subsequent layers of the sieve, as is required for the optimization in step 1. In results shown below we allow the cardinality of the variables to increase by only one at each level to avoid state space explosion, even if doing so causes $I(X^{k-1}_i ;Y_k) > 0$. We keep track of these penalty terms so that we can report accurate lower bounds using Eq.~\ref{eq:bounds}.

Another issue to note is that in general there may not be a unique choice for the remainder information. In the example, $I(X_3;Y)=0$ already so we choose $X_3^1 = X_3$, but $X_3^1 = X_3 + Y_1 \mod 2$ would also have been a valid choice. If the identity transformation, $X_i^k = X_i^{k-1}$ satisfies the conditions, we will always choose it. 

\para{Step 3: Repeat until the end} At this point we repeat the procedure, putting the remainder information back into step 1 and searching for a new latent factor that explains any remaining dependency. 
In this case, we can see by inspection that $TC(X^1)=0$ and, using Eq.~\ref{eq:iterative}, we have $TC(X) = TC(X^1) + TC(X;Y_1) = 1\mbox{ bit}$. Generally, in high-dimensional spaces it may be difficult to verify that the remainder information is truly independent.
When the remainder information is independent, the result of attempting the optimization $\max_{p(y_k|x^{k-1})} TC(X^{k-1}; Y_k)=0$. In practice, we stop our hierarchical procedure when the optimization in step 1 stops producing positive results because it means our bounds are no longer tightening. 
Code implementing this entire pipeline is available~\cite{sieve_code}. 

\para{Prediction and compression}
Note that our condition for the remainder information that $H(X_i^{k-1}|X_i^k, Y_k) = 0$ implies that we can perfectly reconstruct each variable $X_i^{k-1}$ from the remainder information at the next layer. Therefore, we can in principle reconstruct the data from the representation at the last layer of the sieve. In the example, the remainder information requires two bits to encode each variable separately, while the data requires three bits to encode each variable separately. The final representation has exploited the redundancy between $X_1, X_2$ to create a more succinct encoding. 
A use case for lossy compression is discussed in Sec.~\ref{sec:mnist}.  
Also note that at each layer some variables are almost or completely explained ($X^1_1,X^1_2$ in the example become constant). Subsequent layers can enjoy a computational speed-up by ignoring these variables that will no longer contribute to the optimization. 

\section{Discrete ICA}\label{sec:ica}

If $X$ represents observed variables then the entropy, $H(X)$, can be interpreted as the average number of bits required to encode a single observation of these variables. In practice, however, if $X$ is high-dimensional then estimating $H(X)$ or constructing this code requires detailed knowledge of $p(x)$, which may require exponentially many samples in the number of variables. 
Going back at least to Barlow~\cite{barlow}, it was recognized that if $X$ is transformed into some other basis, $Y$, with the $Y$'s independent ($TC(Y)=0$), then the coding cost in this new basis is $H(Y) = \sum_j H(Y_j)$, i.e., it is the same as encoding each variable separately. This is exactly the problem of independent component analysis: transform the data into a basis for which $TC(Y)=0$, or is minimized~\cite{comon,ica}. 

\begin{figure*}[tbp] 
   \centering
   \includegraphics[width=0.95\textwidth]{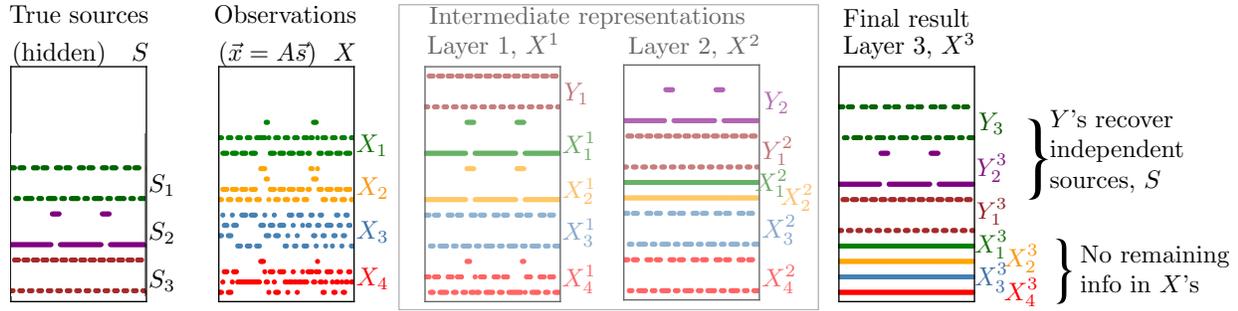} \quad
   \caption{On the far left, we consider three independent binary random variables, $S_1, S_2, S_3$. The vertical position of each signal is offset for visibility.
   From left to right: Independent source data is linearly mixed, $\vec x = A \vec s$.  This data, $X$, is fed into the information sieve. After going through some intermediate representations, the final result is shown on the far right. Consult Fig.~\ref{fig:sieve}(b) for the variable naming scheme. The final representation recovers the independent input sources.  
 }
   \label{fig:ica}   \vspace{-2mm}
\end{figure*}

While our method does not directly minimize the total correlation of $Y$, Eq.~\ref{eq:iterative} shows that, because $TC(X)$ is a data-dependent constant, every increase in the total correlation explained by each latent factor directly implies a reduction in the dependence of the resulting representation, 
$$TC(X^r) = TC(X) - \sum_{k=1}^r TC(X^{k-1} ; Y_k).$$ 
Since the terms in the sum are optimized (and always non-negative), the dependence is decreased at each level. 
That independence could be achieved as a byproduct of efficient coding has been previously considered~\cite{schmidhuberICA}. An approach that leading to ``less dependent components'' for continuous variables has also been shown~\cite{lca}. 

For discrete variables, which are the focus of this paper, performing ICA is a challenging and active area of research. Recent state-of-the-art results lower the complexity of this problem to \emph{only} a single exponential in the number of variables~\cite{feder}. Our method represents a major leap for this problem as it is only linear in the number of variables, however, we only guarantee extraction of components that are \emph{more independent}, while the approach of Painsky et. al. guarantees a global optimum. 

The most commonly studied scenario for ICA is to consider a reconstruction problem where some (typically continuous) and independent source variables are linearly mixed according to some unknown matrix~\cite{comon,ica}. The goal is to recover the matrix and unmix the components (back into their independent sources).  Next we demonstrate our discrete independent component recovery on an example reminiscent of traditional ICA examples. 

\para{An ICA example}
Fig.~\ref{fig:ica} shows an example of recovering independent components from discrete random variables. The sources, $S$, are hidden and the observations, $X$, are a linearly mixture of these sources. The mixing matrix used in this example is  
$$A= ((1, 1, 1),
 (2, 0, -1),
 (1, 2, 0),
 (-1, 1, 0)).$$
 The information sieve continues to add layers as long as it increases the tightness of the information bounds. The intermediate representations at each layers are also shown. For instance, layer 1 extracts one independent component, and then removes this component from the remainder information. After three layers, the sieve stops because $X^3$ consists of independent variables and therefore the optimization of $\max TC(X^3;Y_4) = 0$. 
 
 In this case, the procedure correctly stops after three latent factors are discovered. Naively, three layers makes this a ``deep'' representation. However, we can examine the functional dependence of $Y$'s and $X$'s by looking at the strength of the mutual information, $I(Y_k ; X_i^{k-1})$, as shown in Fig.~\ref{fig:ica_decomp}. This allows us to see that none of the learned latent factors ($Y$'s) depend on each other so the resulting model is actually, in some sense, shallow. The example in the next section, for contrast, has a deep structure where $Y$'s depend on latent factors from previous layers.  Note that the structure in Fig.~\ref{fig:ica_decomp} perfectly reflects the structure of the mixing matrix (i.e., if we flipped the arrows and changes the $Y$'s to $S$'s, this would be an accurate representation of the generative model we used). 
\begin{figure}[htbp] 
   \centering
   \includegraphics[width=0.4\columnwidth]{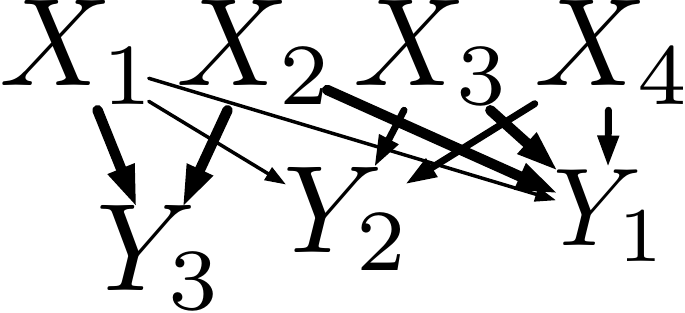} 
   \caption{This visualizes the structure of the learned representation for the ICA example in Fig.~\ref{fig:ica}. The thickness of links is proportional to $I(Y_k ; X_i^{k-1})$. }
   \label{fig:ica_decomp}
     \vspace{-2mm}
\end{figure}
    
While the sieve is guaranteed to recover independent components in some limit, there may be multiple ways to decompose the data into independent components. 
Because our method does not start with the assumption of a linear mixing of independent sources, even if such a decomposition exists we might recover a different one. 
While the example we showed happened to return the linear solution that we used to generate the problem, there is no guarantee to find a linear solution, even if one exists.

\section{Lossy compression on MNIST digits}\label{sec:mnist}

The information sieve is not a generative probabilistic model. We construct latent factors that are functions of the data in a way that maximizes the (multivariate) information that is preserved. Nevertheless, because of the way the remainder information is constructed, we can run the sieve in reverse and, if we throw away the remainder information and keep only the $Y$'s, we get a lossy compression. We can use this lossy compression interpretation to perform tasks that are usually achieved using generative models including in-painting and generating new examples (the converse, interpreting a generative model as lossy compression, has also been considered~\cite{hintonRBM}).

We illustrate the steps for lossy compression and in-painting in Fig.~\ref{fig:lossy}. Imagine that we have already trained a sieve model. For lossy compression, we first transform some data using the sieve. The sieve is an invertible transformation, so we can run it in reverse to exactly recover the inputs. Instead we store only the labels, $Y$, throwing the remainder information, $X_{1:n}^k$, away. When we invert the sieve, what values should we input for $X_{1:n}^k$? During training, we estimate the most likely value to occur for each variable, $X_i^k$. W.l.o.g., we relabel the symbols so that this value is $0$.  Then, for lossy recovery, we run the sieve in reverse using the labels we stored, $Y$, and setting $X_{1:n}^k = 0$. 

In-painting proceeds in essentially the same way. We take advantage of the fact that we can transform data even in the presence of missing values, as described in Sec.~A. Then we replace missing values in the remainder information with $0$'s and invert the sieve normally. 

\begin{figure}[htbp] 
   \centering
  \includegraphics[width=0.95\columnwidth]{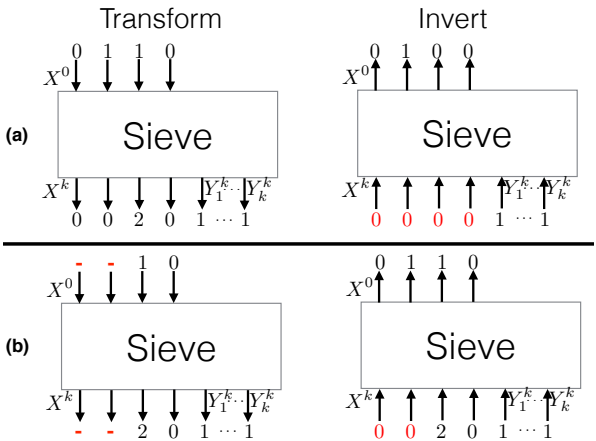} 
   \caption{(a) We use the sieve to transform data into some labels, $Y$ plus remainder information. For lossy recovery, we invert the sieve using only the $Y$'s, setting $X$'s to zero. (b) For in-painting, we first transform data with missing values. Then we invert the sieve, again using zeros for the missing remainder information. }
   \label{fig:lossy}
\end{figure}

\begin{figure}[htbp] 
   \centering
   (a) \includegraphics[width=0.85\columnwidth]{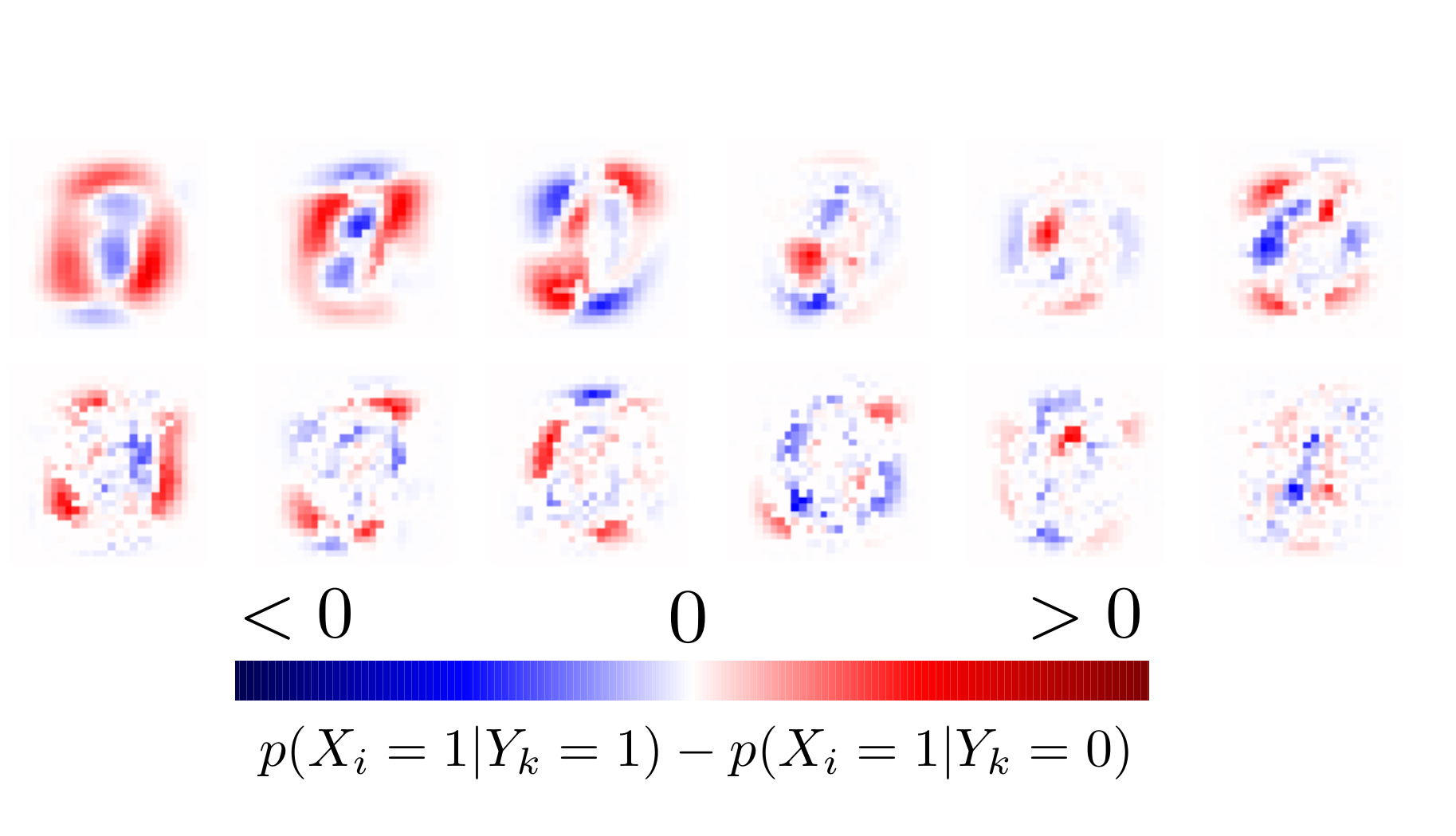}
   
   \vspace{4mm}
   (b)\includegraphics[width=0.6\columnwidth]{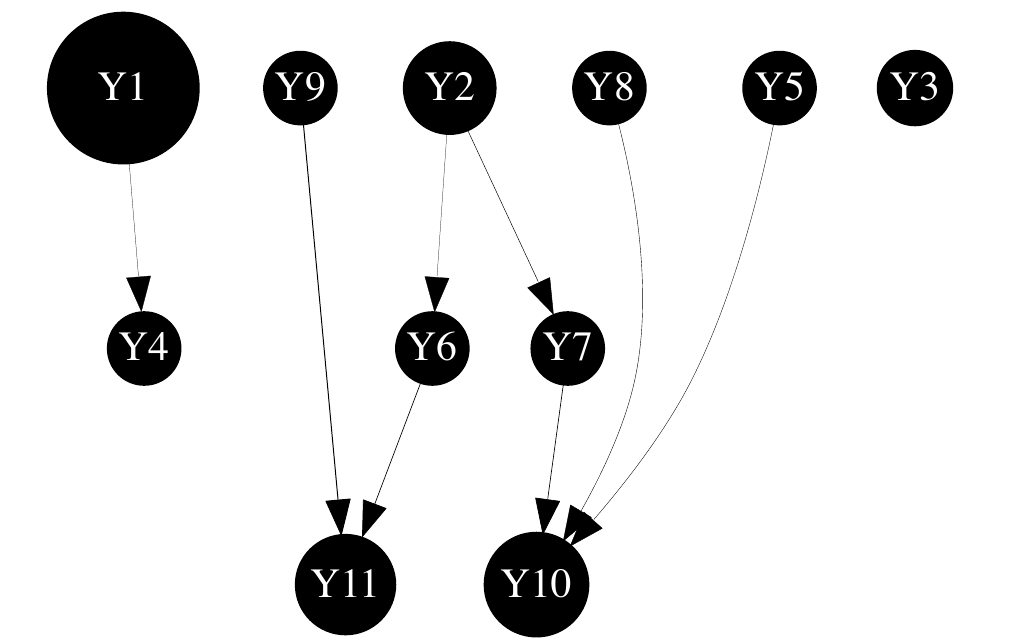}    
   \caption{(a) We visualize each of the learned components, arranged in reading order. (b) The structural relationships among the latent factors is based on $I(Y_k ; Y_j^{k-1})$.  The size of a node represents the magnitude of $TC(X^{k-1};Y_k)$. 
   }    \vspace{-4mm}
   \label{fig:mnist_structure}
\end{figure}

For the following tasks, we consider 50k MNIST digits that were binarized at the normalized grayscale threshold of $0.5$. We include no prior knowledge about spatial structure or invariance under transformations through convolutional structure or pooling, for instance. The $28{\times}28$ binarized images are treated as binary vectors in a 784 dimensional space. The digit labels are also not used in our analysis.
We trained the information sieve on this data, adding layers as long as the bounds were tightening. This led to a 12 layer representation and a lower bound on $TC(X)$ of about 40 bits. It seems likely that more than 12 layers could be effective but the growing size of the state space for the remainder information increases the difficulty of estimation with limited data. 
A visualization of the learned latent factors and the relationships among them appears in Fig.~\ref{fig:mnist_structure}. Unlike the ICA example, the latent factors here have exhibit multi-layered relationships.

\begin{figure*}[htbp] 
   \centering
   \includegraphics[width=0.99\textwidth]{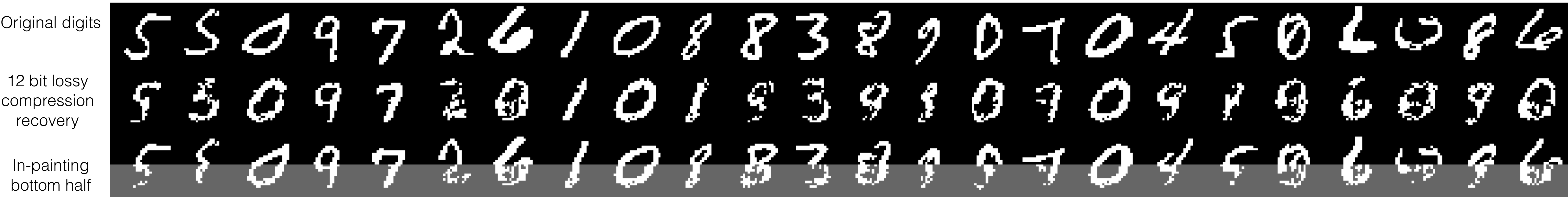} 
   \caption{The top row are randomly selected MNIST digits. In the second row, we compress the digits into the 12 binary variables, $Y_k$, and then attempt to reconstruct the image. In the bottom row, we learn $Y$'s using just the pixels in the top half and then recover the pixels in the bottom half.
    }   \vspace{-2mm}
   \label{fig:digits}
\end{figure*}

The middle row of Fig.~\ref{fig:digits} shows results from the lossy compression task. We use the sieve to transform the original digits into 12 binary latent factors, $Y$, plus remainder information for each pixel, $X^{12}_{1:784}$, and then we use the $Y$'s alone to reconstruct the image. In the third row, the $Y$'s are estimated using only pixels from the top half. Then we reconstruct the pixels on the bottom half from these latent factors. Similar results on test images are shown in Sec.~D, along with examples of ``hallucinating'' new digits.


\section{Lossless compression}\label{sec:lossless}

Given samples of $X$ drawn from $p(x)$, the best compression we can do in theory is to use an average of $H(X)$ bits for our compressed representation~\cite{shannon}.  However, in practice, if $X$ is high-dimensional then we cannot accurately estimate $p(x)$ to achieve this level of compression. We consider alternate schemes and compare them to the information sieve for a compression task. 

\para{Benchmark} 
For a lossless compression benchmark, we consider a set of 60k of binarized digits with 784 pixels, where the order of the pixels has been randomly permuted (the same unknown permutation is applied to each image).
 Note that we have made this task artificially more difficult than the straightforward task of compressing digits because many compression schemes exploit spatial correlations among neighboring pixels for efficiency. 
The information sieve is unaffected by this permutation since it does not make any assumptions about the structure of the input space (e.g. the adjacency of pixels). We use 50k digits as training for models, and report compression results on the 10k test digits. 

Naively, these 28 by 28 binary pixels would require 784 bits per digit to store. However, some pixels are almost always zero. According to Shannon, we can compress pixel $i$ using just $H(X_i)$ bits on average~\cite{shannon}. Because the state space of each individual bit is small, this bound is actually achievable (using arithmetic coding~\cite{cover}, for example). Therefore, we should be able to store the digits using $\sum_i H(X_i) \approx 297 \mbox{ bits/digit}$ on average.   

We would like to make the data more compressible by first transforming it. We consider a simplified version of the sieve with just one layer. We let $Y$ take $m$ possible values and then optimize it according to our objective. For the remainder information, we use the (invertible) function $\bar x_i  = |x_i - \arg \max_{z} p(X_i=z | Y=y)|$. In other words, $\bar X$ represents deviation from the most likely value of $x_i$ for a given value of $y$. The cost of storing a digit in this new representation will be $\log_2 m + \sum_{i=1}^{784} H(\bar X_i)$, where $\log_2 m$ bits are used to store the value of $Y$. 

For comparison, we consider an analogous benchmark introduced in~\cite{compress_mnist}. For this benchmark, we just choose $m$ random digits as representatives (from the training set). Then for each test digit, we store the identity of the closest representative (by Hamming distance), along with the error which we will also call $\bar X_i$, so that we can recover the original digit. Again, the number of bits per digit will just by $\log_2 m$ plus the cost of storing the errors for each pixel according to Shannon. 

\begin{figure}[htbp] 
   \centering
   \includegraphics[width=0.09\columnwidth]{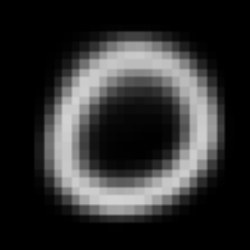}\includegraphics[width=0.09\columnwidth]{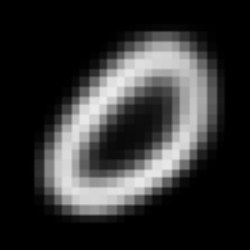}\includegraphics[width=0.09\columnwidth]{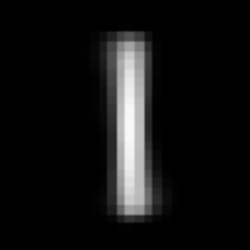}\includegraphics[width=0.09\columnwidth]{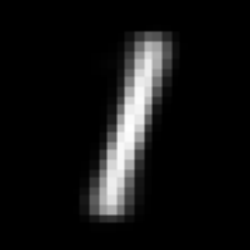}\includegraphics[width=0.09\columnwidth]{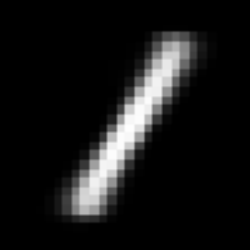}\includegraphics[width=0.09\columnwidth]{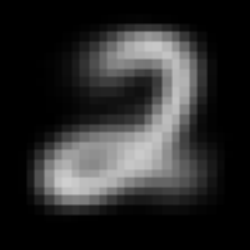}\includegraphics[width=0.09\columnwidth]{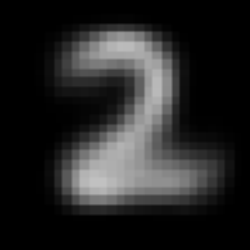}\includegraphics[width=0.09\columnwidth]{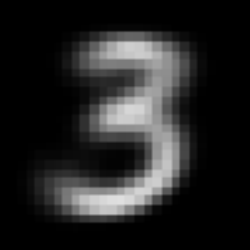}\includegraphics[width=0.09\columnwidth]{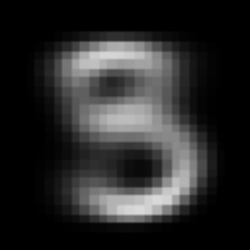}\includegraphics[width=0.09\columnwidth]{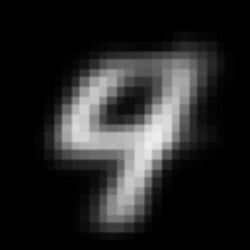}
   
   \includegraphics[width=0.09\columnwidth]{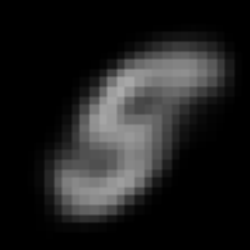}\includegraphics[width=0.09\columnwidth]{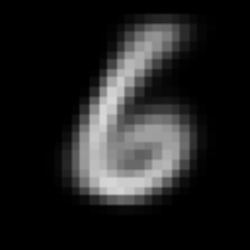}\includegraphics[width=0.09\columnwidth]{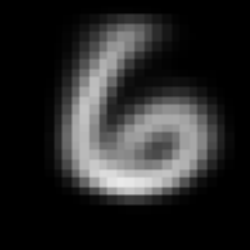}\includegraphics[width=0.09\columnwidth]{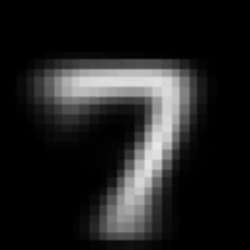}\includegraphics[width=0.09\columnwidth]{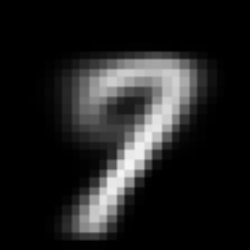}\includegraphics[width=0.09\columnwidth]{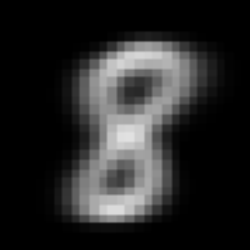}\includegraphics[width=0.09\columnwidth]{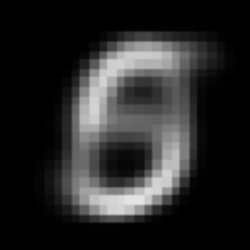}\includegraphics[width=0.09\columnwidth]{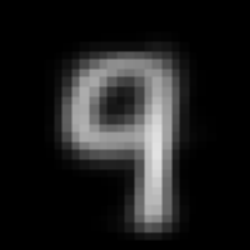}\includegraphics[width=0.09\columnwidth]{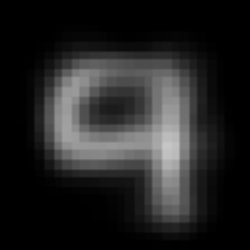}\includegraphics[width=0.09\columnwidth]{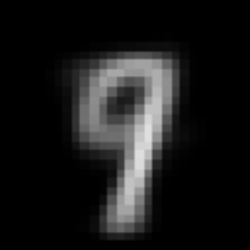}
   \caption{This shows $p(x_i=1|y=k)$ for $k=1,\ldots,20$ for each pixel, $x_i$, in an image.
   }    \vspace{-2mm}
   \label{fig:20components}
\end{figure}
  
Consider the single layer sieve with $Y=1,\ldots, m$ and $m=20$. After optimizing, Fig.~\ref{fig:20components} visualizes the components of $Y$. As an exercise in unsupervised clustering the results are somewhat interesting; the sieve basically finds clusters for each digit and for slanted versions of each digit. 
In Fig.~\ref{fig:20xbar} we explicitly construct the remainder information (bottom row), i.e. the deviation between the most likely value of each pixel conditioned on $Y$ (middle row) and the original (top row). 



\begin{figure}[htbp] 
   \centering
   \includegraphics[width=0.08\columnwidth]{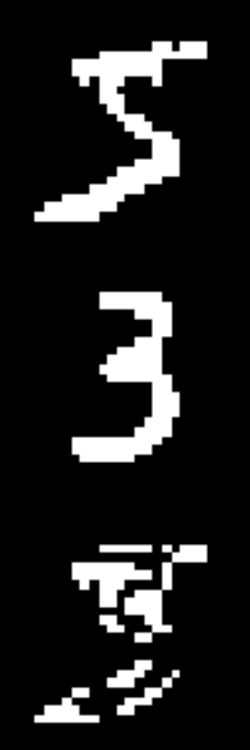}\includegraphics[width=0.08\columnwidth]{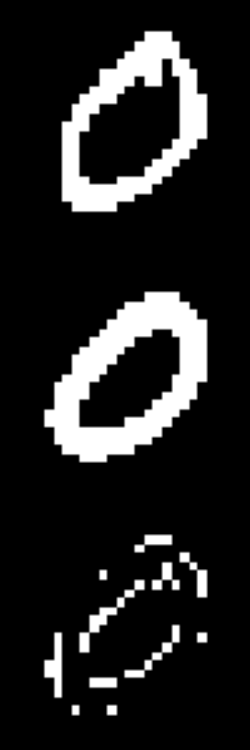}\includegraphics[width=0.08\columnwidth]{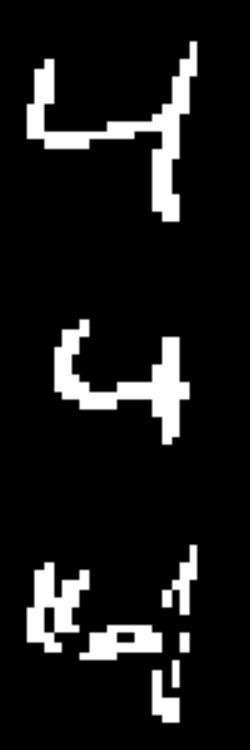}\includegraphics[width=0.08\columnwidth]{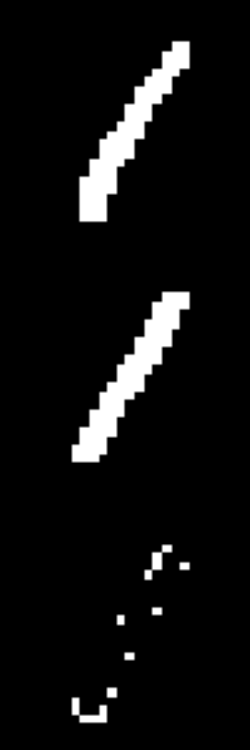}\includegraphics[width=0.08\columnwidth]{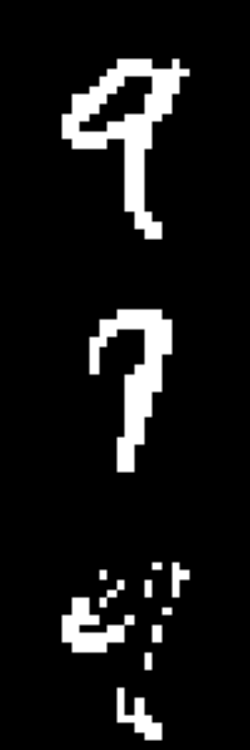}\includegraphics[width=0.08\columnwidth]{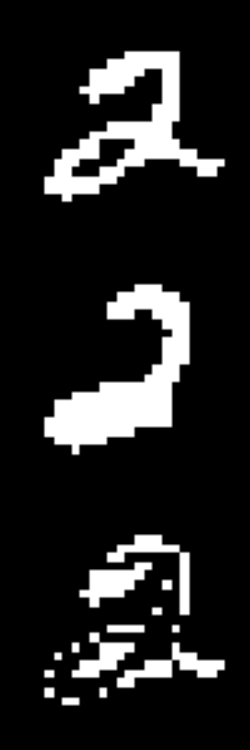}\includegraphics[width=0.08\columnwidth]{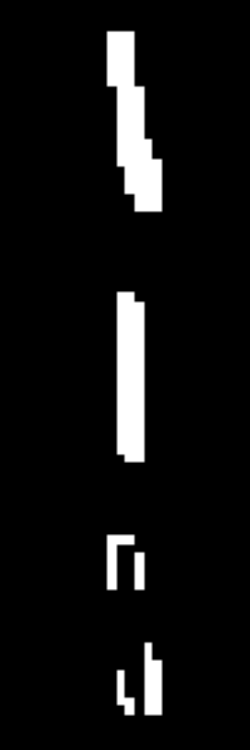}\includegraphics[width=0.08\columnwidth]{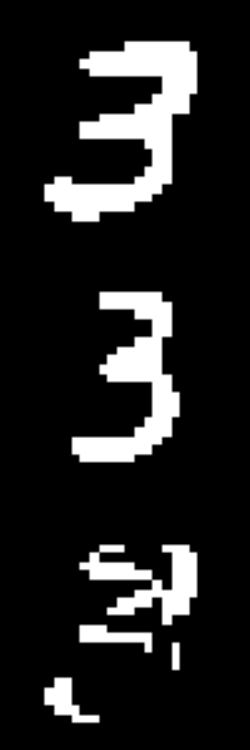}\includegraphics[width=0.08\columnwidth]{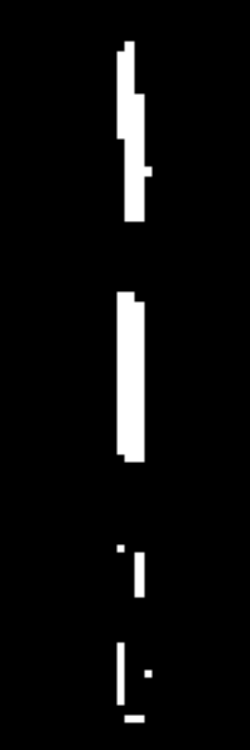}\includegraphics[width=0.08\columnwidth]{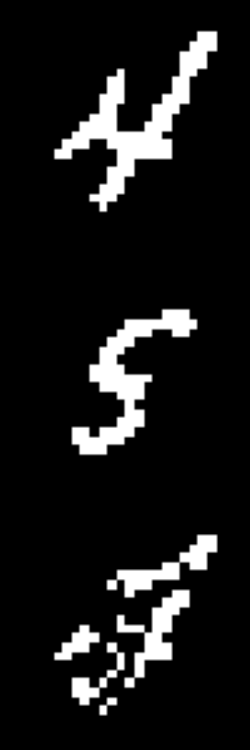}
   \caption{The top row shows the original digit, the middle row shows the most likely values of the pixels conditioned on the label, $y=1,\ldots, 20$, and the bottom row shows the remainder or residual error, $\bar X$. 
   }   \vspace{-2mm}
   \label{fig:20xbar}
\end{figure}

The results of our various compression benchmarks are shown in Table~\ref{tab:compress}. For comparison we also show results from two standard compression schemes, gzip, based on Lempel-Ziv coding~\cite{lz77}, and Huffman coding~\cite{huffman}. We take the better compression result from storing and compressing the $784 \times 50000$ data array in column-major or row-major order with these (sequence-based) compression schemes.
Note that the sieve and random representative benchmark that we described require a codebook of fixed size whose contribution is asymptotically negligible and is not included in the results. 


 
\begin{table}[htbp]
   \centering
   \topcaption{Summary of compression results. Results with a ``*'' are reported based on empirical compression results rather than Shannon bounds.} 
  {\small
   \begin{tabular}{@{} lcc @{}} 
      \toprule
      Method    & Bits per digit \\
      \midrule
      Naive & 784 \\
      Huffman*~\cite{huffman} & 376\\
      gzip*~\cite{lz77} & 328 \\
      Bitwise & 297 \\
      20 random representatives & 293 \\
      50 random representatives & 279\\
      100 random representatives & 267\\
      20 sieve representatives & 266\\
      50 sieve representatives & 252\\
      100 sieve representatives & 243\\
      \bottomrule
   \end{tabular}}
   \label{tab:compress}\vspace{-3mm}
\end{table}

\paragraph{Discussion}

First of all, sequence-based compression schemes have a serious disadvantage in this setup. Because the pixels are scrambled, to take advantage of correlations would require longer window sizes than is typical. The random compression scheme does significantly better. Despite the scrambled pixels, at least it uses the fact that the data consist of iid samples of length 784 pixels. However, the sieve leads to much better compression; for instance, 20 sieve representatives are as good as 100 random ones. 
The idea behind ``factorial codes''~\cite{barlow} is that if we can transform our data so that the variables are independent, and then (optimally) compress each variable separately, we will achieve a globally optimal compression. The compression results shown here are promising, but are not state-of-the-art. The reason is that our discovery of discrete independent components comes at a cost of increasing the cardinality of variables at each layer of the sieve. To define a more practical compression scheme, we would have to balance the trade-off between reducing dependence and controlling the size of the state space. We leave this direction for future work.

\section{Related work}\label{sec:related}

The idea of decomposing multivariate information as an underlying principle for unsupervised representation learning has been recently introduced~\cite{corex_theory, nips2014} and used in several contexts~\cite{pepke, mad_spie}. 
While bounds on $TC(X)$ were previously given, here we provided an exact decomposition. Our decomposition also introduces the idea of remainder information. While previous work required fixing the depth and number of latent factors in the representation, remainder information allows us to build up the representation incrementally, learning the depth and number of factors required as we go. Besides providing a more flexible approach to representation and structure learning, the invertibility of the information sieve makes it more naturally suited to a wider variety of tasks including lossy and lossless compression and prediction. 
Another interesting related result showed that positivity of the quantity $TC(X;Y)$ (the same quantity appearing in our bounds) implies that the $X$'s share a common ancestor in any DAG consistent with $p_X(x)$~\cite{steudelay}. A different line of work about information decomposition focuses on distinguishing synergy and redundancy~\cite{williamsbeer}, though these measures are typically impossible to estimate for high-dimensional systems. Finally, a different approach to information decomposition focuses on the geometry of the manifold of distributions defined by different models~\cite{amari}. 

Connections with ICA were discussed in Sec.~\ref{sec:ica} and the relationship to InfoMax was discussed in Sec.~\ref{sec:background}. 
The information bottleneck (IB)~\cite{tishby} is another information-theoretic optimization for constructing representations of data that has many mathematical similarities to the objective in Eq.~\ref{eq:opt}, with the main difference being that IB focuses on supervised learning while ours is an unsupervised approach. 
Recently, the IB principle was used to investigate the value of depth in the context of supervised learning~\cite{tishby_deep}. The focus here, on the other hand, is to find an information-theoretic principle that justifies and motivates deep representations for unsupervised learning. 

\section{Conclusion}\label{sec:conclusion}

We introduced the information sieve, which provides a decomposition of multivariate information for high-dimensional (discrete) data that is also computationally feasible. The extension of the sieve to continuous variables is nontrivial but appears to result in algorithms that are more robust and practical~\cite{linearsieve}.
We established here a few of the immediate implications of the sieve decomposition. First of all, we saw that a natural notion of ``remainder information'' arises and that this allows us to extract information in an incremental way.
Several distinct applications to fundamental problems in unsupervised learning were demonstrated and appear promising for in-depth exploration.
The sieve provides an exponentially faster method than the best known algorithm for discrete ICA (though without guarantees of global optimality). 
We also showed that the sieve defines both lossy and lossless compression schemes. 
Finally, the information sieve suggests a novel conceptual framework for understanding unsupervised representation learning. 
Among the many deviations from standard representation learning a few properties stand out. Representations are learned incrementally and the depth and structure emerge in a data-driven way. Representations can be evaluated information-theoretically and the decomposition allows us to separately characterize the contribution of each hidden unit in the representation. 

\section*{Acknowledgments}
GV acknowledges support from AFOSR grant FA9550-12-1-0417 and GV and AG acknowledge support from DARPA grant W911NF-12-1-0034 and IARPA grant FA8750-15-C-0071.

{ \small
\bibliographystyle{icml2016}
\bibliography{bibs/gversteeg,bibs/galstyan,bibs/gversteeg_t,bibs/versteeg} 
}

\appendix

\clearpage
\counterwithin{figure}{section}

\section*{Supplementary Material for ``The Information Sieve''}

\section{Detail of the optimization of $TC(X;Y)$}\label{sec:optimization}
We need to optimize the following objective. 
\benn
\max_{p(y|x)} \sum_i I(X_i ; Y) - I(X;Y)
\eenn
If we take the derivative of this expression (along with the constraint that $p(y|x)$ should be normalized)  and set it equal to zero, the following simple fixed point equation emerges.
\benn
p(y|x) = \frac{p(y)}{Z(x)} \prod_{i=1}^n \frac{p(x_i|y)}{p(x_i)}
\eenn
Surprisingly, optimizing this objective over possible functions has a fixed point solution with a simple form.
This leads to an iterative solution procedure that actually corresponds to a special case of the one considered in~\cite{corex_theory}. There it is shown that each iterative update of the fixed-point equation increases the objective and that we are therefore guaranteed to converge to a local optimum of the objective. In short, we consider the empirical distribution over observed samples. For each sample, we start with a random probabilistic label. Then we use these labels to estimate the marginals, $p(x_i|y)$, then we use the fixed point to re-estimate $p(y|x)$, and so on until convergence. 

Also, note that we can estimate the value of the objective in a simple way. The normalization term, $Z(x)$ is computed for each sample by just summing over the two values of $Y=y$, since $Y$ is binary. The expected logarithm of $Z$, or the free energy is an estimate of the objective ~\cite{corex_theory}. 

\paragraph{Algorithmic details}
The code implementing this optimization is included as a module in the sieve code~\cite{sieve_code}. 
The algorithm is described in Alg.~\ref{alg1}. Note we use $\delta$ as the discrete delta function. The complexity is $O(k \times N \times n)$, where $n$ is the number of variables, $N$ is the number of samples, and $k$ is the cardinality of the latent factor, $Y$. Because the solution only depends on estimation of marginals between $X_i$ and $Y$, the number of samples needed for accurate estimation is small~\cite{corex_theory}. 

\begin{algorithm}[tb]
   \caption{Optimizing $TC(X;Y)$}
   \label{alg1}
\begin{algorithmic}
   \STATE {\bfseries Input:} Data matrix, $x_i^l$, \\ $i=1,\ldots,n$ variables, $l=1,\ldots,N$ samples.
    \STATE {\bfseries Specify $k$:} Cardinality of $Y=1, \ldots, k$
     \REPEAT
   \STATE Randomly initialize $p(Y=y|X=x^l)$
      \STATE $p(Y=y) = 1/ n \sum_l p(y |x^l)$
   \FOR{$i=1$ {\bfseries to} $n$}
   \STATE $p(X_i=x_i|Y=y) = 1/N \sum_l p(Y=y|X=x^l) \delta_{x_i, x_i^l} / p(Y=y)$
   \ENDFOR
   \STATE $p(Y=y|X=x^l) = \frac{p(Y=y)}{Z(x)} \prod_{i=1}^n \frac{p(X_i = x_i^l|Y=y)}{p(X_i = x_i^l)}$
   \UNTIL{Convergence}
\end{algorithmic}
\end{algorithm}


\paragraph{Labeling test data} 
The fixed point equation above essentially gives us a simple representation of the labeling function in terms of some parameters which, in this case, just correspond to the marginal probability distributions. We simply input values of $x$ from a test set into that equation, and then round $y$ to the most likely value to generate labels. 

\paragraph{Missing data} Note that missing data is handled quite gracefully in this scenario. Imagine that some subset of the $X_i$'s are observed. Denote the subset of indices for which we have observed data on a given sample with $G$ and the subset of random variables as $x_G$. If we solved the optimization problem for this subset only, we would get a form for the solution like this: 
\benn
p(y|x_G) = \frac{p(y)}{Z(x)} \prod_{i\in G} \frac{p(x_i|y)}{p(x_i)}.
\eenn
In other words, we simply omit the contribution from unobserved variables in the product. 


\section{Proof of Theorem~\ref{incremental}}\label{sec:inc_proof}

We begin by adopting a general definition for ``representations'' and recalling a useful theorem concerning them. 
\begin{definition}
The random variables $Y \equiv Y_1,\ldots,Y_m$ constitute a \emph{representation} of $X$ if the joint distribution factorizes, $p(x,y) = \prod_{j=1}^m p(y_j|x) p(x), \forall x \in \mathcal X, \forall j \in \{1,\ldots,m\}, \forall y_j \in \mathcal Y_j$. A representation is completely defined by the domains of the variables and the conditional probability tables, $p(y_j|x)$.
\end{definition}
\begin{theorem}
\emph{Basic Decomposition of Information}~\cite{corex_theory}
\label{basic}

If $Y$ is a representation of $X$ and we define,
\begin{align}\label{eq:tcl}
TC_{L} (X; Y) &\equiv  \sum_{i=1}^n I(Y:X_i) - \sum_{j=1}^m I(Y_j:X),
\end{align}
then the following bound and decomposition holds. 
\be\label{eq:basic}
TC(X) \geq TC(X;Y) = TC(Y) + TC_L(X;Y)
\ee
\end{theorem}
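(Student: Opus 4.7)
The plan is to unpack both sides of the claimed equality in terms of entropies, use the single structural fact supplied by the representation condition, and then handle the inequality by identifying the gap as a non-negative conditional total correlation.

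First I would record the consequence of the representation hypothesis that matters. The factorization $p(x,y)=p(x)\prod_j p(y_j\mid x)$ is exactly the statement that $Y_1,\ldots,Y_m$ are conditionally independent given $X$, which gives the identity $H(Y\mid X)=\sum_{j=1}^m H(Y_j\mid X)$. This is the only place the representation assumption will enter; everything else is algebra on mutual informations.

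Next I would prove the equality $TC(X;Y)=TC(Y)+TC_L(X;Y)$ by expanding the right-hand side. Writing $TC(Y)=\sum_j H(Y_j)-H(Y)$ and substituting the definition of $TC_L$ from \eqref{eq:tcl}, the sum becomes
\begin{equation*}
\sum_{j=1}^m\bigl(H(Y_j)-I(Y_j;X)\bigr)-H(Y)+\sum_{i=1}^n I(Y;X_i)=\sum_{j=1}^m H(Y_j\mid X)-H(Y)+\sum_{i=1}^n I(Y;X_i).
\end{equation*}
Applying the conditional-independence identity $\sum_j H(Y_j\mid X)=H(Y\mid X)$ collapses the first two terms to $-I(X;Y)$, yielding $\sum_i I(X_i;Y)-I(X;Y)$, which is exactly $TC(X;Y)$ by \eqref{eq:tcxy}.

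For the inequality $TC(X)\ge TC(X;Y)$ I would simply invoke the definition $TC(X;Y)=TC(X)-TC(X\mid Y)$ from \eqref{eq:tcxy}, so the gap equals $TC(X\mid Y)=\mathbb{E}_Y\bigl[D_{KL}\bigl(p(x\mid Y)\,\|\,\prod_i p(x_i\mid Y)\bigr)\bigr]\ge 0$ as an expectation of KL divergences. I do not expect any real obstacle here: the whole argument is a one-line algebraic identity on entropies plus non-negativity of a KL divergence, with the representation assumption contributing only the decomposition of $H(Y\mid X)$ into a sum. The only thing to be careful about is not to assume any independence among the $Y_j$'s marginally (which is false in general) or any structure beyond conditional independence given $X$.
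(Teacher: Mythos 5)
Your proof is correct. Note that the paper itself does not prove this theorem: it is recalled verbatim from \cite{corex_theory} and used as a black box in the proof of Theorem~\ref{incremental}, so there is no in-paper argument to compare against --- your derivation supplies the missing proof. The steps check out: the representation condition $p(x,y)=p(x)\prod_j p(y_j\mid x)$ enters only through $H(Y\mid X)=\sum_j H(Y_j\mid X)$; combining this with $H(Y_j)-I(Y_j;X)=H(Y_j\mid X)$ collapses $TC(Y)+TC_L(X;Y)$ to $\sum_i I(X_i;Y)-I(X;Y)=TC(X;Y)$, which matches Eq.~\ref{eq:tcxy}; and the inequality follows from $TC(X)-TC(X;Y)=TC(X\mid Y)\ge 0$, non-negativity holding because the conditional total correlation is an expectation of KL divergences. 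Your closing caution is also the right one: nothing may be assumed about marginal dependence among the $Y_j$'s, and your argument never does.
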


\begin{theorem*} 
\emph{ Incremental Decomposition of Information}

Let $Y$ be some (deterministic) function of $X_1, \ldots, X_n$ and for each $i = 1,\ldots,n$, $\bar X_i$ is a probabilistic function of $X_i, Y$. Then the following upper and lower bounds on $TC(X)$ hold.
\be\label{eq:bounds2}
 - \sum_{i=1}^n I(\bar X_i ; Y) \leq  \\ 
 TC(X) - \left(TC(\bar X) + TC(X;Y)\right) \leq  \\
 \sum_{i=1}^n H(X_i | \bar X_i, Y)
\ee 
\end{theorem*}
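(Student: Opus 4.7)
Let $\Delta := TC(X) - TC(\bar X) - TC(X;Y)$, where $\bar X = (\bar X_1,\ldots,\bar X_n, Y)$. The plan is to derive a single closed-form identity for $\Delta$ and then read off the two bounds from nonnegativity of entropy / mutual information. First I would unpack the definitions:
$TC(X)=\sum_i H(X_i)-H(X)$, $TC(\bar X) = \sum_i H(\bar X_i)+H(Y) - H(\bar X_{1:n},Y)$, and $TC(X;Y)=\sum_i I(X_i;Y)-I(X;Y)$. Since $Y=f(X)$ is deterministic, $I(X;Y)=H(Y)$, which causes several $H(Y)$ terms to cancel. After writing $I(X_i;Y)=H(X_i)-H(X_i\mid Y)$ and collecting terms, the expression collapses to
\begin{equation*}
\Delta = H(\bar X_{1:n},Y) - H(X) + \sum_{i=1}^n H(X_i\mid Y) - \sum_{i=1}^n H(\bar X_i).
\end{equation*}

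The second step is to exploit the structural assumption on the $\bar X_i$'s. Because $Y$ is a function of $X$ and each $\bar X_i$ depends only on $(X_i,Y)$, the joint law factorises as $p(\bar X_{1:n}\mid X,Y)=\prod_i p(\bar X_i\mid X_i,Y)$, i.e.\ the $\bar X_i$'s are conditionally independent given $(X,Y)$. This yields the entropy identity
\begin{equation*}
H(\bar X_{1:n},Y,X) \;=\; H(X) + \sum_{i=1}^n H(\bar X_i\mid X_i,Y),
\end{equation*}
which, combined with $H(\bar X_{1:n},Y,X) = H(\bar X_{1:n},Y) + H(X\mid \bar X_{1:n},Y)$, lets me eliminate $H(\bar X_{1:n},Y)$ from the formula for $\Delta$. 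After regrouping using $H(X_i\mid Y)-H(\bar X_i)+H(\bar X_i\mid X_i,Y) = H(X_i\mid Y,\bar X_i) - I(\bar X_i;Y)$ (via $I(\bar X_i;X_i,Y)=I(\bar X_i;Y)+I(\bar X_i;X_i\mid Y)$ and $H(X_i\mid Y)-I(\bar X_i;X_i\mid Y)=H(X_i\mid Y,\bar X_i)$), I obtain the key identity
\begin{equation*}
\Delta \;=\; \sum_{i=1}^n H(X_i\mid \bar X_i, Y) \;-\; \sum_{i=1}^n I(\bar X_i;Y) \;-\; H(X\mid \bar X_{1:n},Y).
\end{equation*}

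The third step is to extract the two bounds from this identity. For the upper bound $B=\sum_i H(X_i\mid \bar X_i, Y)$, drop the nonnegative terms $\sum_i I(\bar X_i;Y)\ge 0$ and $H(X\mid \bar X_{1:n},Y)\ge 0$. For the lower bound $A=-\sum_i I(\bar X_i;Y)$, use nonnegativity of the first sum together with subadditivity of conditional entropy, $H(X\mid \bar X_{1:n},Y)=H(X_{1:n}\mid \bar X_{1:n},Y)\le \sum_i H(X_i\mid \bar X_i,Y)$, which exactly cancels the first sum and leaves $\Delta\ge -\sum_i I(\bar X_i;Y)$.

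The main obstacle is the bookkeeping in step~1--2: keeping track of which $H(Y)$'s cancel (using determinism of $Y$) and correctly using the conditional independence $p(\bar X_{1:n}\mid X,Y)=\prod_i p(\bar X_i\mid X_i,Y)$ to handle the joint entropy $H(\bar X_{1:n},Y)$. Once the identity for $\Delta$ is in hand, both bounds are immediate.
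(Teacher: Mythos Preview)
Your proof is correct and reaches an identity equivalent to the paper's, but by a more direct route. The paper begins by invoking a separate ``Basic Decomposition'' result, $TC(X;\bar X)=TC(\bar X)+TC_L(X;\bar X)$, and then performs several chain-rule expansions of $TC_L(X;\bar X)$ to obtain
\[
\Delta \;=\; TC(X\mid \bar X)\;-\;\sum_i I(\bar X_i;Y)\;+\;\sum_i I(X_i;\bar X_{\tilde i}\mid Y,\bar X_i),
\]
which is just a regrouping of your identity, since $TC(X\mid\bar X)+\sum_i I(X_i;\bar X_{\tilde i}\mid Y,\bar X_i)=\sum_i H(X_i\mid\bar X_i,Y)-H(X\mid\bar X_{1:n},Y)$. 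You instead expand the three $TC$ terms from scratch, use determinism of $Y$ to cancel the $H(Y)$ contributions, and then use the conditional-independence factorisation $p(\bar X_{1:n}\mid X,Y)=\prod_i p(\bar X_i\mid X_i,Y)$ to handle the joint entropy $H(\bar X_{1:n},Y)$; this bypasses the auxiliary theorem entirely. Your extraction of the bounds is also slightly cleaner: the lower bound follows in one stroke from subadditivity, $H(X\mid\bar X_{1:n},Y)\le\sum_i H(X_i\mid\bar X_i,Y)$, whereas the paper drops two separate nonnegative pieces ($TC(X\mid\bar X)$ and the conditional mutual informations) that together amount to the same inequality. The upshot: your argument is self-contained and shorter; the paper's detour through $TC_L$ ties the result to the broader decomposition framework it uses elsewhere.
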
 
\begin{proof}
We refer to Fig.~\ref{fig:sieve}(a) for the structure of the graphical model. We set $\bar X \equiv \bar X_1,\ldots, \bar X_n, Y$ and we will write $\bar X_{1:n}$ to pick out all terms except $Y$. Note that because $Y$ is a deterministic function of $X$, we can view $\bar X_i$ as a probabilistic function of $X_i, Y$ or of $X$ (as required by Thm.~\ref{basic}). 
Applying Thm.~\ref{basic}, we have
$$ TC(X; \bar X) = TC(\bar X) + TC_L(X; \bar X).$$
On the LHS, note that $TC(X; \bar X) = TC(X) - TC(X | \bar X)$, so we can re-arrange to get 
\be\label{eq:step1}
TC(X) - (TC(\bar X) + TC(X;Y)) \\
= TC(X|\bar X) + TC_L(X;\bar X) - TC(X;Y).
\ee 
The LHS is the quantity we are trying to bound, so we focus on expanding the RHS and bounding it. 

First we expand $TC_L(X; \bar X) = \sum_{i=1}^n I(X_i ; \bar X) - \sum_{i=1}^n I(\bar X_i ; X) - I(Y;X)$. Using the chain rule for mutual information we expand the first term. 
\benn
TC_L(X; \bar X) =& \sum_{i=1}^n I(X_i ; Y) \\ &+ \sum_{i=1}^n I(X_i ; \bar X_{1:n} | Y)  \\ &- \sum_{i=1}^n I(\bar X_i ; X) - I(Y;X). 
\eenn
Rearranging, we take out a term equal to $TC(X;Y)$.
\benn
TC_L(X; \bar X) =& TC(X;Y) + \\
&\sum_{i=1}^n I(X_i ; \bar X_{1:n} | Y) - \sum_{i=1}^n I(\bar X_i ; X). 
\eenn
We use the chain rule again to write $I(X_i ; \bar X_{1:n} | Y) = I(X_i ; \bar X_i | Y) + I(X_i ; \bar X_{\tilde i} | Y \bar X_i)$, where $\bar X_{\tilde i} \equiv \bar X_1, \ldots, \bar X_n$ with $\bar X_i$ (and $Y$) excluded. 
\benn
TC_L(X; \bar X) =& TC(X;Y) + \sum_{i=1}^n ( I(X_i ; \bar X_i | Y) 
\\ &+I(X_i ; \bar X_{\tilde i} | Y \bar X_i) - I(\bar X_i ; X) ). 
\eenn
The conditional mutual information, $I(A;B|C) = I(A;BC) - I(A;C)$. We expand the first instance of CMI in the previous expression.
\benn
TC_L(X; \bar X) = & TC(X;Y) + \sum_{i=1}^n ( I( \bar X_i; X_i, Y) 
\\ & - I(\bar X_i ; Y) +I(X_i ; \bar X_{\tilde i} | Y \bar X_i)
\\ &  - I(\bar X_i ; X) ).
\eenn
Since $Y=f(X)$, the first and fourth terms cancel.
Finally, this leaves us with 
\benn TC_L(X; \bar X) = & TC(X;Y) - \sum_{i=1}^n I(\bar X_i ; Y) 
\\ & + \sum_{i=1}^n I(X_i ; \bar X_{\tilde i} | Y \bar X_i).
\eenn
Now we can replace all of this back in to Eq.~\ref{eq:step1}, noting that the $TC(X;Y)$ terms cancel.
\be\label{eq:step2}
TC(X) - (TC(\bar X) + TC(X;Y)) \\
= TC(X|\bar X) - \sum_{i=1}^n I(\bar X_i ; Y) + \sum_{i=1}^n I(X_i ; \bar X_{\tilde i} | Y \bar X_i).
\ee 
First, note that total correlation, conditional total correlation, mutual information, conditional mutual information, and entropy (for discrete variables) are non-negative. Therefore we trivially have the lower bound, $LHS \geq - \sum_{i=1}^n I(\bar X_i ; Y) $. All that remains is to find the upper bound. We drop the negative mutual information, expand the definition of $TC$ in the first line, then drop the negative of an entropy in the second line. 
\benn
LHS &\leq \sum_{i=1}^n H(X_i | \bar X) - H(X|\bar X) + \sum_{i=1}^n I(X_i ; \bar X_{\tilde i} | Y \bar X_i) \\
& \leq \sum_{i=1}^n \left(H(X_i | \bar X)  + I(X_i ; \bar X_{\tilde i} | Y \bar X_i) \right) \\
&= \sum_{i=1}^n H(X_i | \bar X_i, Y)
\eenn
The equality in the last line can be seen by just expanding all the definitions of conditional entropies and conditional mutual information. These provide the upper and lower bounds for the theorem. 
\end{proof}

\begin{figure}[htbp] 
   \centering
   \includegraphics[width=0.95\columnwidth]{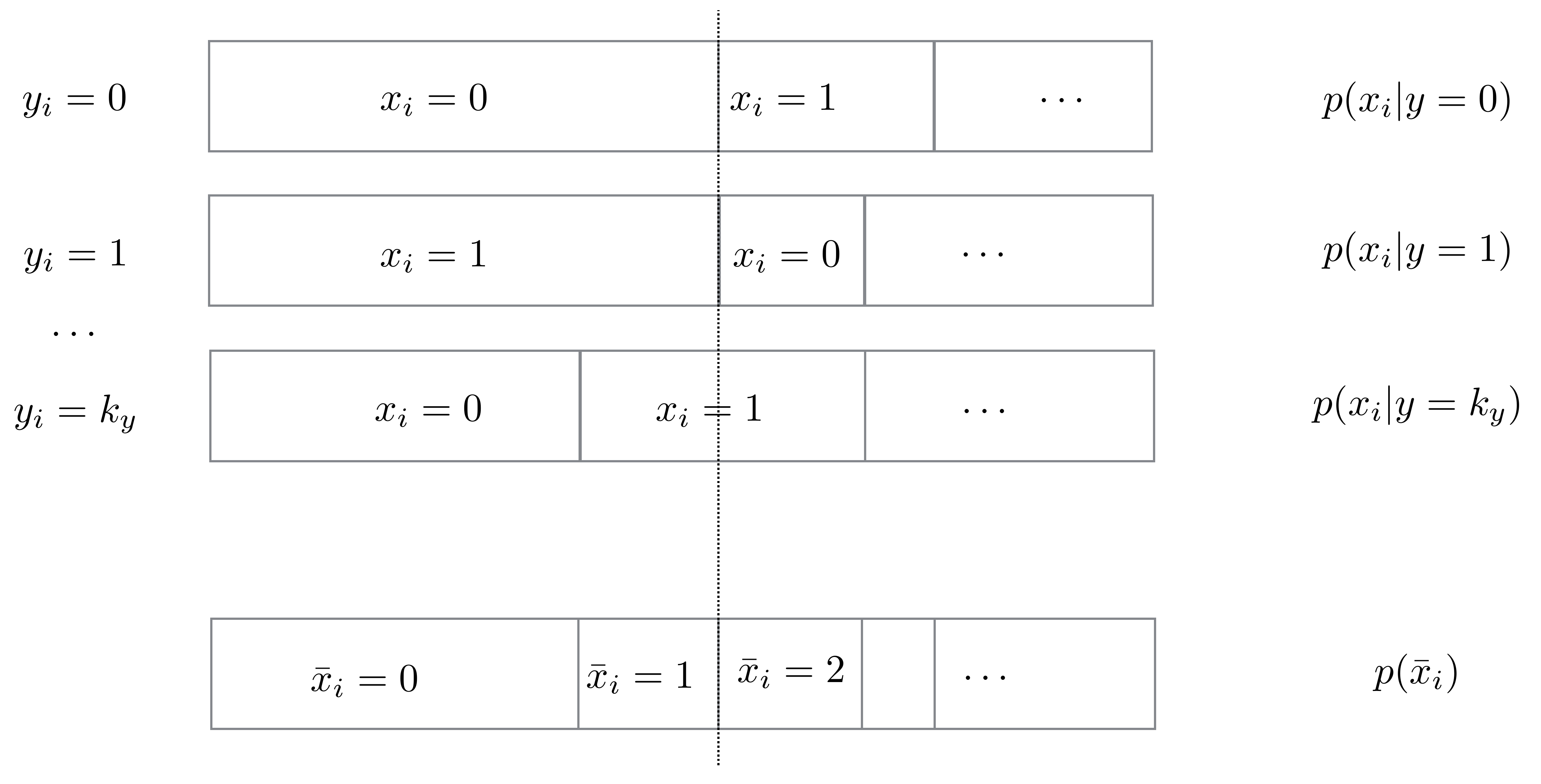} 
   \caption{An illustration of how the remainder information, $\bar x_i$, is constructed from statistics about $p(x_i,y)$. }
   \label{fig:remainder}
\end{figure}

\begin{figure*}[!htbp] 
   \centering
   \includegraphics[width=0.03\textwidth]{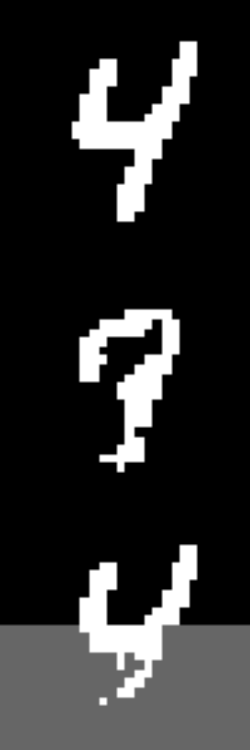}\includegraphics[width=0.03\textwidth]{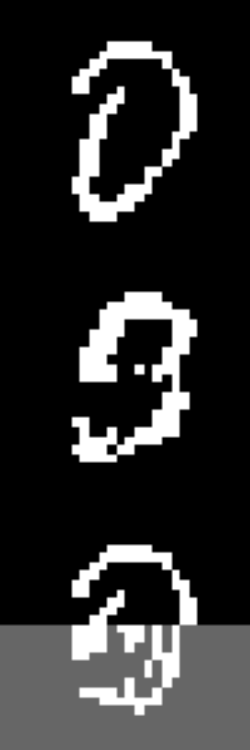}\includegraphics[width=0.03\textwidth]{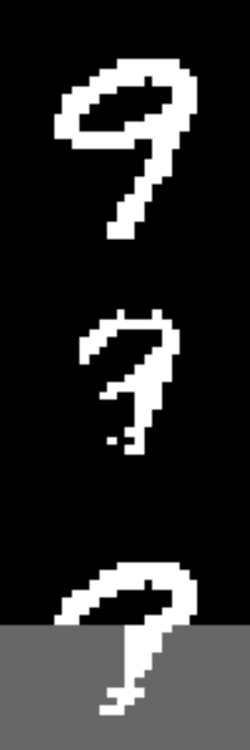}\includegraphics[width=0.03\textwidth]{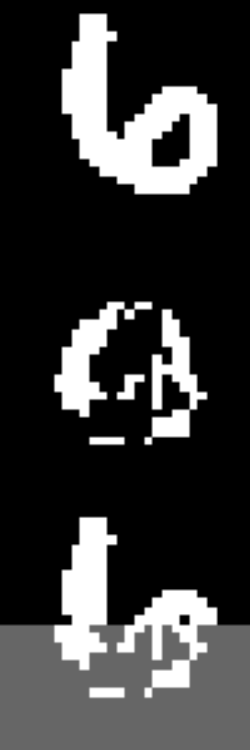}\includegraphics[width=0.03\textwidth]{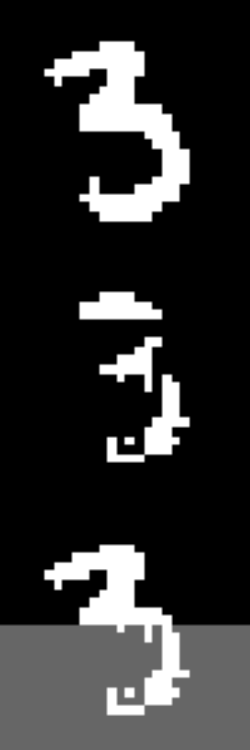}\includegraphics[width=0.03\textwidth]{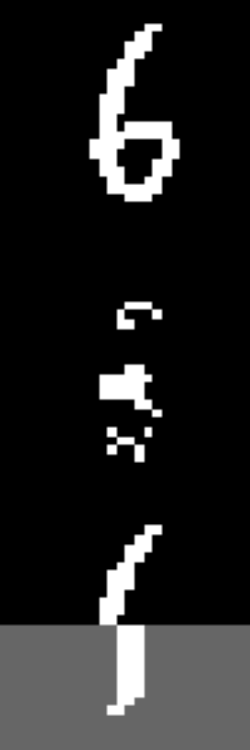}\includegraphics[width=0.03\textwidth]{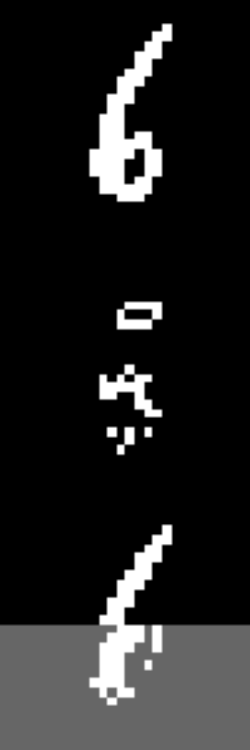}\includegraphics[width=0.03\textwidth]{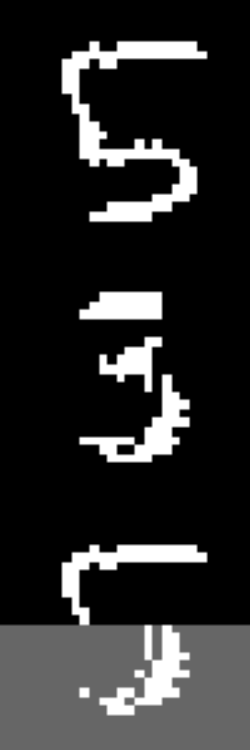}\includegraphics[width=0.03\textwidth]{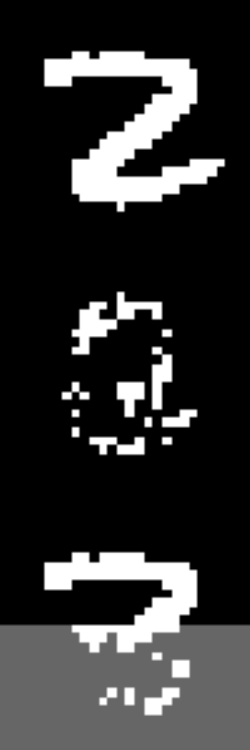}\includegraphics[width=0.03\textwidth]{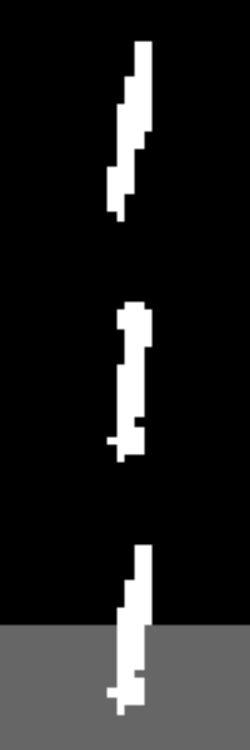}\includegraphics[width=0.03\textwidth]{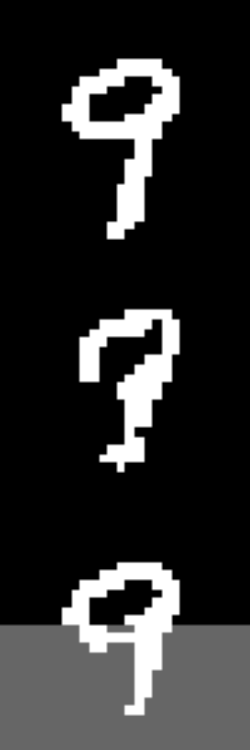}\includegraphics[width=0.03\textwidth]{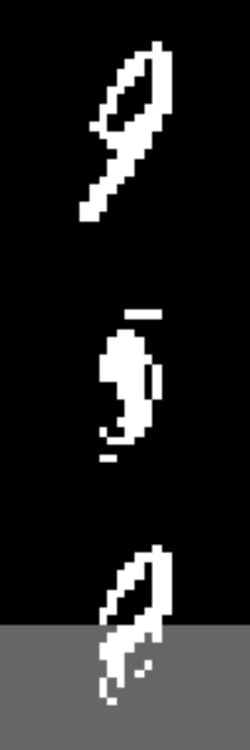}\includegraphics[width=0.03\textwidth]{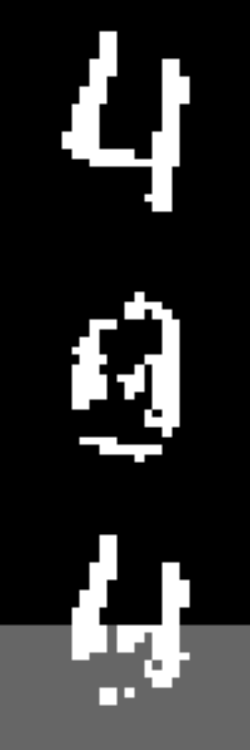}\includegraphics[width=0.03\textwidth]{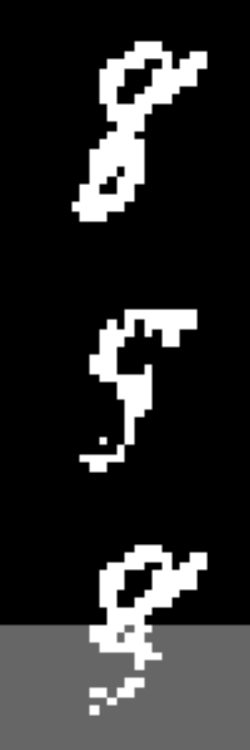}\includegraphics[width=0.03\textwidth]{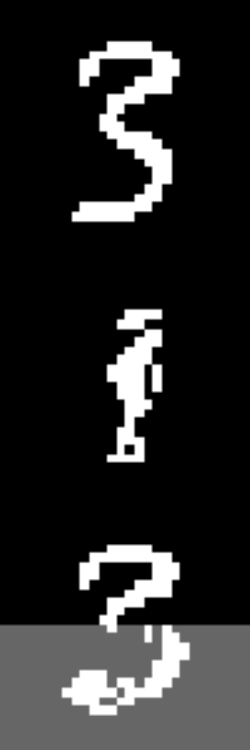}\includegraphics[width=0.03\textwidth]{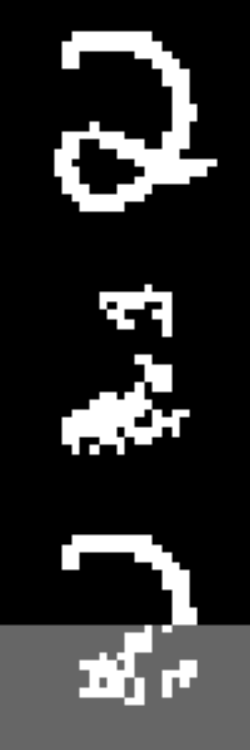}\includegraphics[width=0.03\textwidth]{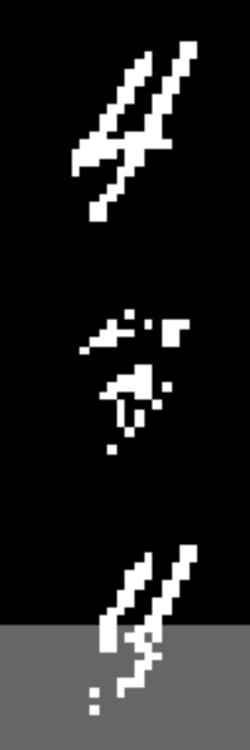}\includegraphics[width=0.03\textwidth]{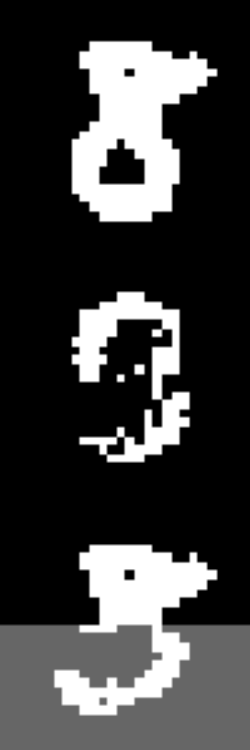}\includegraphics[width=0.03\textwidth]{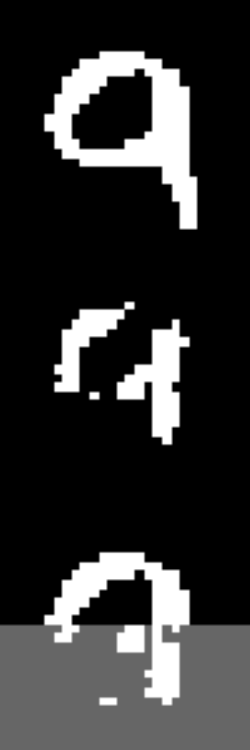}\includegraphics[width=0.03\textwidth]{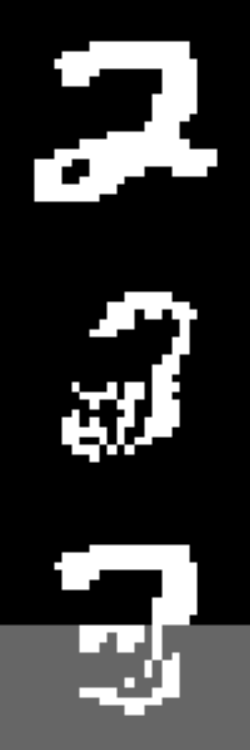}\includegraphics[width=0.03\textwidth]{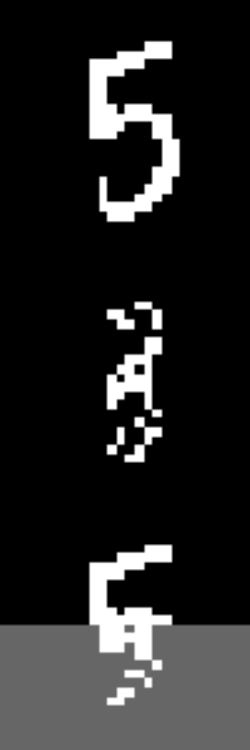}\includegraphics[width=0.03\textwidth]{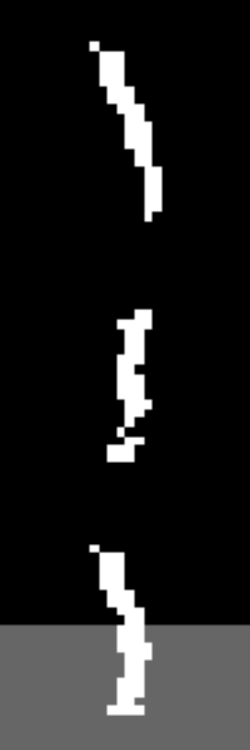}\includegraphics[width=0.03\textwidth]{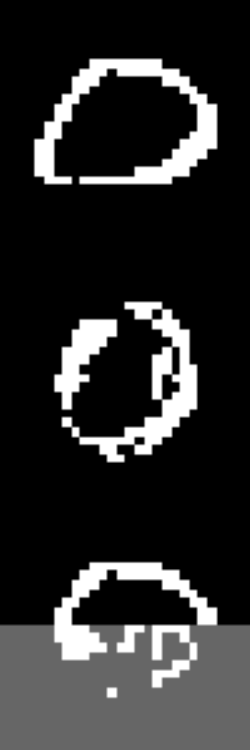}\includegraphics[width=0.03\textwidth]{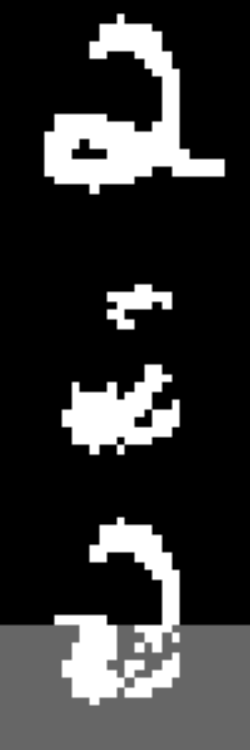}\includegraphics[width=0.03\textwidth]{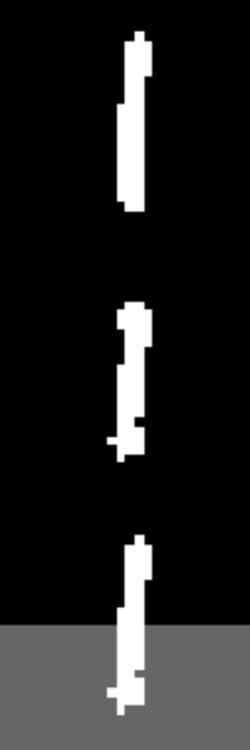}
   \caption{The same results as Fig.~\ref{fig:digits} but using samples from a test set instead of the training set.
   }   \vspace{-4mm}
   \label{fig:more_digits}
\end{figure*}

\section{An algorithm for perfect reconstruction of remainder information}\label{sec:remainder}

We will use the notation of Fig.~\ref{fig:sieve}(a) to construct remainder information for one variable in one layer of the sieve. The goal is to construct the remainder information, $\bar X_i$, as a probabilistic function of $X_i, Y$ so that we satisfy the conditions of Lemma~\ref{lemma}, $$(i) \qquad I(\bar X_i ;Y)=0 \qquad (ii) \qquad H(X_i|\bar X_i,Y) =0.$$
We need to write down a probabilistic function $p(\bar x_i | x_i, y)$ so that, for the observed statistics, $p(x_i, y)$, these conditions are satisfied. 
There are many ways to accomplish this, and we sketch out one solution here. The actual code we use to generate remainder information for results in this paper are available~\cite{sieve_code}.

We start with the picture in Fig.~\ref{fig:remainder} that visualizes the conditional probabilities $p(x_i|y)$. Note that the order of the $x_i$ for each value of $y$ can be arbitrary for this scheme to succeed. For concreteness, we sort the values of $x_i$ for each $y$ in order of descending likelihood. Next, we construct the marginal distribution, $p(\bar x_i)$. Every time we see a split in one of the histograms of $p(x_i|y)$, we introduce a corresponding split for $p(\bar x_i)$. Now, to construct $p(\bar x_i | x_i, y)$, for each $\bar x_i = q$, for each $y=j$, we find the unique value of $x_i = k(j,q)$ that is directly above the histogram for $p(\bar x_i = q)$. Then we set $p(\bar x_i = q | x_i, y) = p(\bar x_i = q) / p(x_i = k(j,q) | y=j)$. Now, marginalizing over $x_i$, $p(\bar x_i |y) = p(\bar x_i)$, ensuring that $I(\bar X_i ;Y) =0$. Visually, it can be seen that $H(X_i | \bar X_i, Y) = 0$ by picking a value of $\bar x_i$ and $y$ and noting that it picks out a unique value of $x_i$ in Fig.~\ref{fig:remainder}. 

Note that the function to construct $\bar x_i$ is probabilistic. Therefore, when we construct the remainder information at the next layer of the sieve, we have to draw $\bar x_i$ stochastically from this distribution. In the example in Sec.~\ref{sec:implement} the functions for the remainder information happened to be deterministic. In general, though, probabilistic functions inject some noise to ensure that correlations with $Y$ are forgotten at the next level of the sieve. In Sec.~\ref{sec:lossless} we point out that this scheme is detrimental for lossless compression and we point out an alternative.

\para{Controlling the cardinality of $\bar x_i$} It is easy to imagine scenarios in Fig.~\ref{fig:remainder} where the cardinality of $\bar x_i$ becomes very large. What we would like is to be able to approximately satisfy conditions (i) and (ii) while keeping the cardinality of the variables, $\bar X_i$, small (so that we can accurately estimate probabilities from samples of data). To guide intuition, consider two extreme cases. First, imagine setting $\bar x_i = 0$, regardless of $x_i,y$. This satisfies condition (i) but maximally violates (ii). The other extreme is to set $\bar x_i = x_i$. In that case, (ii) is satisfied, but $I(\bar X_i ;Y) = I(X_i;Y)$. This is only problematic if $X_i$ is related to $Y$ to begin with. If it is, and we set $\bar X_i = X_i$, then the same dependence can be extracted at the next layer as well (since we pass $X_i$ to the next layer unchanged). 

In practice we would like to find the best solution with a cardinality of fixed size. Note that this can be cast as an optimization problem where $p(\bar x_i =  | x_i, y)$ represent $\bar k \times k_x \times k_y$ variables to optimize over if those are the respective cardinalities of the variables. Then we can minimize a nonlinear objective like $\mathcal O = H(X_i | \bar X_i, Y) + I(\bar X_i ; Y)$ over these variables. While off-the-shelf solvers will certainly return local optima for this problem, the optimization is quite slow, especially if we let $k$'s get big.

For the results in this paper, instead of directly solving the optimization problem above to get a representation with cardinality of fixed size, we first construct a perfect solution without limiting the cardinality. Then we modify that solution to let either (i) or (ii) grow somewhat while reducing the cardinality of $\bar x_i$ to some target. To keep $I(\bar X_i ; Y) = 0$ while reducing the cardinality of $\bar x_i$, we just pick the $\bar x_i$ with the smallest probability and merge it with another value for $\bar x_i$.  
On the other hand, to reduce the cardinality while keeping $H(X_i | \bar X_i, Y) = 0$, we again start by finding the $\bar x_i = k$ with the lowest probability. Then we take the probability mass for $p(\bar x_i =k | x_i, y)$ for each $x_i$ and $y$ and add it to the $p(\bar x_i \neq k | x_i, y)$ that already has the highest likelihood for that $x_i,y$ combination. Note that $I(\bar X_i;Y)$ will no longer be zero after doing so. For both of these schemes (keeping (i) fixed or keeping (ii) fixed) we reduce cardinality until we achieve some target. For the results in this paper we alway picked $k_{\bar x_i} = k_{x_i} + 1$ as the target and we always used the strategy where (ii) was satisfied and we let (i) be violated. In cases where perfect remainder information is impractical due to issues of finite data, we have to define ``good remainder information'' based on how well it preserves the bounds in Thm.~\ref{incremental}. The best way to do this may depend on the application, as we saw in Sec.~\ref{sec:lossless}.

\section{More MNIST results}\label{sec:more_mnist}

Fig.~\ref{fig:more_digits} shows the same type of results as Fig.~\ref{fig:digits} but using test data that was never seen in training. Note that no labels were used in any training. 

There are several plausible to generate new, never before seen images using the sieve. Here we chose to draw the variables at the last layer of the sieve randomly and independently according to each of their marginal distributions over the training data. Then we inverted the sieve to recover hallucinated images. Some example results are shown in Fig.~\ref{fig:hallucinate}. 

\begin{figure}[!htbp] 
   \centering
   \includegraphics[width=0.6\columnwidth]{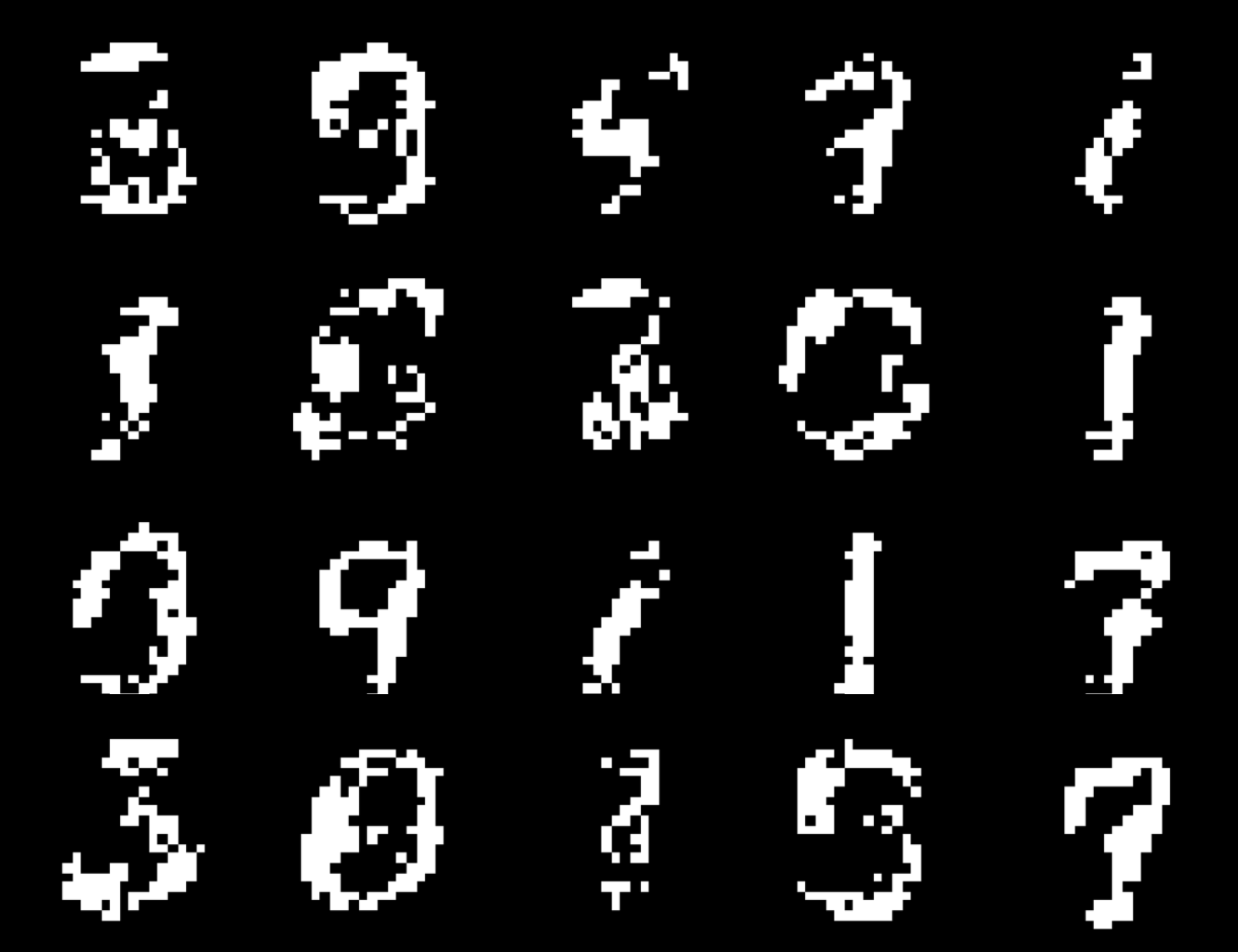} 
   \caption{An attempt to generate new images using the sieve. }
   \label{fig:hallucinate}
\end{figure}


\end{document}